\def\eqref#1{equation~\ref{#1}}
\def\1{\bm{1}}
\def\vx{{\bm{x}}}
\DeclareMathAlphabet{\mathsfit}{\encodingdefault}{\sfdefault}{m}{sl}
\SetMathAlphabet{\mathsfit}{bold}{\encodingdefault}{\sfdefault}{bx}{n}
\newcommand{\E}{\mathbb{E}}
\newtheorem{proposition}{Proposition}
\definecolor{codeblue}{rgb}{0.25,0.5,0.25}
\definecolor{codekw}{rgb}{0.85, 0.18, 0.50}
\pgfplotsset{compat=1.18}
\definecolor{myorange}{RGB}{217, 119, 37} 
\definecolor{myteal}{RGB}{60, 174, 163}   
\title{\textsc{RLP}: Reinforcement as a Pretraining Objective}
\author{Ali Hatamizadeh$^{\dagger1}$, Syeda Nahida Akter$^{\dagger2}$\thanks{Work done during internship at NVIDIA},~~ Shrimai Prabhumoye$^{\dagger1,3}$,~~Jan Kautz$^{1}$,\\
\textbf{Mostofa Patwary$^{1}$,~~Mohammad Shoeybi$^{1}$,~~ Bryan Catanzaro$^{1}$,~~ Yejin Choi$^{1,4}$}\\
NVIDIA$^{1}$, Carnegie Mellon University$^{2}$, Boston University$^{3}$, Stanford University$^{4}$\\
\texttt{ahatamizadeh@nvidia.com},~~\texttt{sprabhumoye@nvidia.com}
}
\newtcolorbox{takeaway}[3][]{
  enhanced,
  sharp corners,
  boxrule=0pt,
  colback=#2!4!white,
  borderline west={3.5pt}{0pt}{#2!85!black},
  drop fuzzy shadow=black!30!white,
  arc=2.6mm,
  left=6mm, right=6mm, top=4mm, bottom=4mm,
  coltitle=black,
  title={\small\bfseries #3}, 
  attach boxed title to top left={yshift=-3.0mm, xshift=4.0mm}, 
  boxed title style={
    boxrule=0.4pt,
    colframe=#2!70!black,
    colback=#2!10!white, 
    rounded corners,
    drop shadow=black!15,
    top=0.6mm, bottom=0.6mm, left=1.8mm, right=1.8mm 
  },
  before skip=2mm, after skip=2mm,
  #1
}
\begin{document}

\maketitle
\begingroup\renewcommand\thefootnote{}\footnotetext{$^{\dagger}$ Equal contribution}\endgroup

\newcommand{\todo}[1]{{\color{red}\bf [TODO: #1]}\xspace}
\newcommand{\shrimai}[1]{{\color{red}\bf [Shrimai: #1]}\xspace}

\newcommand{\syeda}[1]{{\color{blue}\bf [Syeda: #1]}\xspace}

\newcommand{\ali}[1]{{\color{violet}\bf [Ali: #1]}\xspace}

\newcommand{\shrimaix}[1]
{\draftcomment{\textcolor{magenta}{Shrimai: \sout{#1}}}}

\newcommand{\syedax}[1]
{\draftcomment{\textcolor{gray}{Syeda: \sout{#1}}}}

\newcommand{\alix}[1]
{\draftcomment{\textcolor{blue}{ali: \sout{#1}}}}

\newcommand{\ptheta}{p_{\theta}}
\newcommand{\pema}{p_{\textsc{ema}}}
\newcommand{\Sref}[1]{\S\ref{#1}}
\Crefname{figure}{Fig.}{Figs.}
\crefname{equation}{Eq.}{Eqs.}
\crefname{section}{Sec.}{Secs.}

\newcommand{\pbar}{p_{\textsc{ema}}}
\newcommand{\given}{\,\middle|\,}
\newcommand{\cD}{\mathcal{D}}
\newcommand{\cS}{\mathcal{S}}
\newcommand{\clip}{\mathrm{clip}}

\newcommand{\pref}{\pi^{\mathrm{ref}}}

\newcommand{\ours}{\textsc{RLP}\xspace}

\newcommand{\om}{OmniMath\xspace}
\newcommand{\nc}{Nemotron-Crossthink\xspace}
\newcommand{\ot}{OpenThoughts\xspace}
\newcommand{\gpr}{GPR\xspace}
\newcommand{\gpc}{GPC\xspace}
\newcommand{\edu}{ACAD\xspace}
\newcommand{\ccmath}{Math-Text\xspace}
\newcommand{\dqa}{Web-Crawl\xspace}
\newcommand{\cpt}{\textsc{cpt}\xspace}

\newcommand{\vlm}{\textsc{vlm}\xspace}
\newcommand{\vlms}{\textsc{vlms}\xspace}
\newcommand{\llm}{\textsc{llm}\xspace}
\newcommand{\lmm}{\textsc{lmm}\xspace}
\newcommand{\llava}{\textsc{llava-1.5}\xspace}

\newcommand{\llama}{\textsc{LLaMA3-70B-Instruct}\xspace}
\newcommand{\llamas}{\textsc{LLaMA3-8B-Instruct}\xspace}
\newcommand{\llamabase}{\textsc{LLaMA3-70B}\xspace}

\newcommand{\owm}{OpenWebMath\xspace}
\newcommand{\mathpile}{\textsc{MathPile}\xspace}
\newcommand{\dsm}{\textsc{DeepSeekMath}\xspace}
\newcommand{\owma}{\textsc{owm}\xspace}
\newcommand{\mmlu}{\textsc{mmlu}\xspace}
\newcommand{\mmlus}{\textsc{mmlu-stem}\xspace}
\newcommand{\gpt}{\textsc{gpt-4v}\xspace}
\newcommand{\qwen}{\textsc{qwen3-1.7b-base}\xspace}
\newcommand{\newqwen}{\textsc{qwen3-14b-base}\xspace}
\newcommand{\gemini}{\textsc{gemini pro}\xspace}
\newcommand{\cc}{CommonCrawl\xspace}
\newcommand{\gptllm}{\textsc{gpt-4}\xspace}
\newcommand{\gptllmold}{\textsc{gpt-3.5}\xspace}
\newcommand{\nemo}{\textsc{Nemotron-Nano-12B-v2}\xspace}

\newcommand{\mbase}{$\mathcal{M}_{\mathrm{base}}$\xspace}
\newcommand{\mours}{$\mathcal{M}_{\ours}$\xspace}
\newcommand{\mcpt}{$\mathcal{M}_{\mathrm{CPT}}$\xspace}
\newcommand{\mrpt}{$\mathcal{M}_{\mathrm{RPT}}$\xspace}

\newcommand{\gsm}{\textsc{gsm8k}\xspace}

\begin{abstract}

The dominant paradigm for training large reasoning models starts with pre-training using next-token prediction loss on vast amounts of data. Reinforcement learning, while powerful in scaling reasoning, is introduced only as the very last phase of post-training, preceded by supervised fine-tuning. While dominant, is this an optimal way of training? In this paper, we present \ours{}, an information-driven reinforcement pretraining objective, that brings the core spirit of reinforcement learning---exploration---to the last phase of pretraining. The key idea is to treat \emph{chain-of-thought} as an exploratory action, with rewards computed based on the \emph{information gain} it provides for predicting future tokens. This training objective essentially encourages the model to think for itself before predicting what comes next, thus teaching an independent thinking behavior earlier in the pretraining. More concretely, the reward signal measures the increase in log-likelihood of the next token when conditioning on both context and a sampled reasoning chain, compared to conditioning on context alone. This approach yields a verifier-free dense reward signal, allowing for efficient training for the full document stream during pretraining. Specifically, \ours{} reframes reinforcement learning for reasoning as a pretraining objective on ordinary text, bridging the gap between next-token prediction and the emergence of useful chain-of-thought reasoning. Pretraining with \ours{} on \qwen{} lifts the overall average across an eight‑benchmark math‑and‑science suite by 19\%. With identical post‑training, the gains compound, with the largest improvements on reasoning‑heavy tasks such as AIME25 and MMLU‑Pro. Applying \ours{} to the hybrid \nemo{} increases the overall average from 42.81\% to 61.32\% and raises the average on scientific reasoning by 23\%, demonstrating scalability across architectures and model sizes.

\end{abstract}

\section{Introduction}

Large Language Models (LLMs) pretrained with next-token prediction loss have demonstrated broad utility, but this  objective does not explicitly encourage long-range reasoning or integration with world knowledge. Consequently, state-of-the-art models~\citep{guo2025deepseek,yang2025qwen3} rely on post-training objectives such as supervised fine-tuning (SFT) and reinforcement learning with human or verified feedback (RLHF, RLAIF, RLVR)~\citep{ouyang2022training,lambert2024tulu} to induce complex reasoning abilities. In contrast, human comprehension is not a linear token-by-token process, but rather a parallel integration of input with prior knowledge~\citep{baumgaertner2002event,hagoort2004integration,metzner2015brain}. Current pretraining lacks such mechanisms, limiting the model’s ability to reason and ground language in world knowledge during learning.



To fill this gap, we propose \textbf{R}einforcement \textbf{L}earning \textbf{P}re-training (\ours{}) which treats Chain‑of‑Thought (CoT) generation as an explicit action taken before predicting each next token. 
As shown in Fig.\ref{fig:pipeline}, the model first samples an internal thought, then predicts the observed token from the same context augmented with that thought. The training signal is the increase in log‑likelihood of the observed token when the thought is present compared to a no‑think baseline. This yields a verifier‑free and dense reward that assigns position‑wise credit wherever thinking improves prediction. Because the signal is defined for ordinary text with teacher forcing, \ours{} reframes reinforcement learning for reasoning as reinforcement pretraining on the same streams used for maximum likelihood.

Unlike post-training with verifiable rewards, which requires task-specific checkers or curated solutions, \ours{} is verifier-free: the signal is computed directly from log-evidence under the model and a baseline, allowing uniform application to domain agnostic web-scale text.
Compared to reinforcement pretraining via prefix‑matching rewards (RPT) \citep{dong2025reinforcement}, which uses sparse binary reward and often relies on proxy‑model filtering of “easy” tokens, \ours{} provides a continuous improvement signal at every position and trains on the full documents. 
This eliminates the need to preselect high-entropy tokens or couple training to a separate small model.
Prior RPT demonstrations also 
depend on distilled checkpoints with strong prior reasoning ability, which clouds whether the method 
helps base models. \ours{} is designed to shape thinking in base models by rewarding only those 
thoughts that measurably help next‑token prediction.

This work makes the following key contributions: We introduce \textbf{\ours{}, a verifier-free information-gain objective} that augments next-token prediction by rewarding thoughts in proportion to their predictive utility. We develop a \textbf{practical and stable training algorithm} that interleaves reinforcement updates with standard likelihood training via group-relative advantages, a clipped surrogate for thought tokens, and a slowly updated Exponential Moving Average (EMA) baseline. We provide \textbf{theoretical guarantees} linking expected reward to reductions in cross-entropy and to a computable lower bound, ensuring both interpretability and tractability. 
We conduct comprehensive experiments showing that \ours{} outperforms strong baselines, remains robust after strong post-training, generalizes across diverse corpora, and scales effectively to larger model sizes and hybrid architectures—establishing it as a broadly applicable reinforcement pretraining objective.

\begin{figure}[t]
  \centering
  \includegraphics[width=\linewidth]{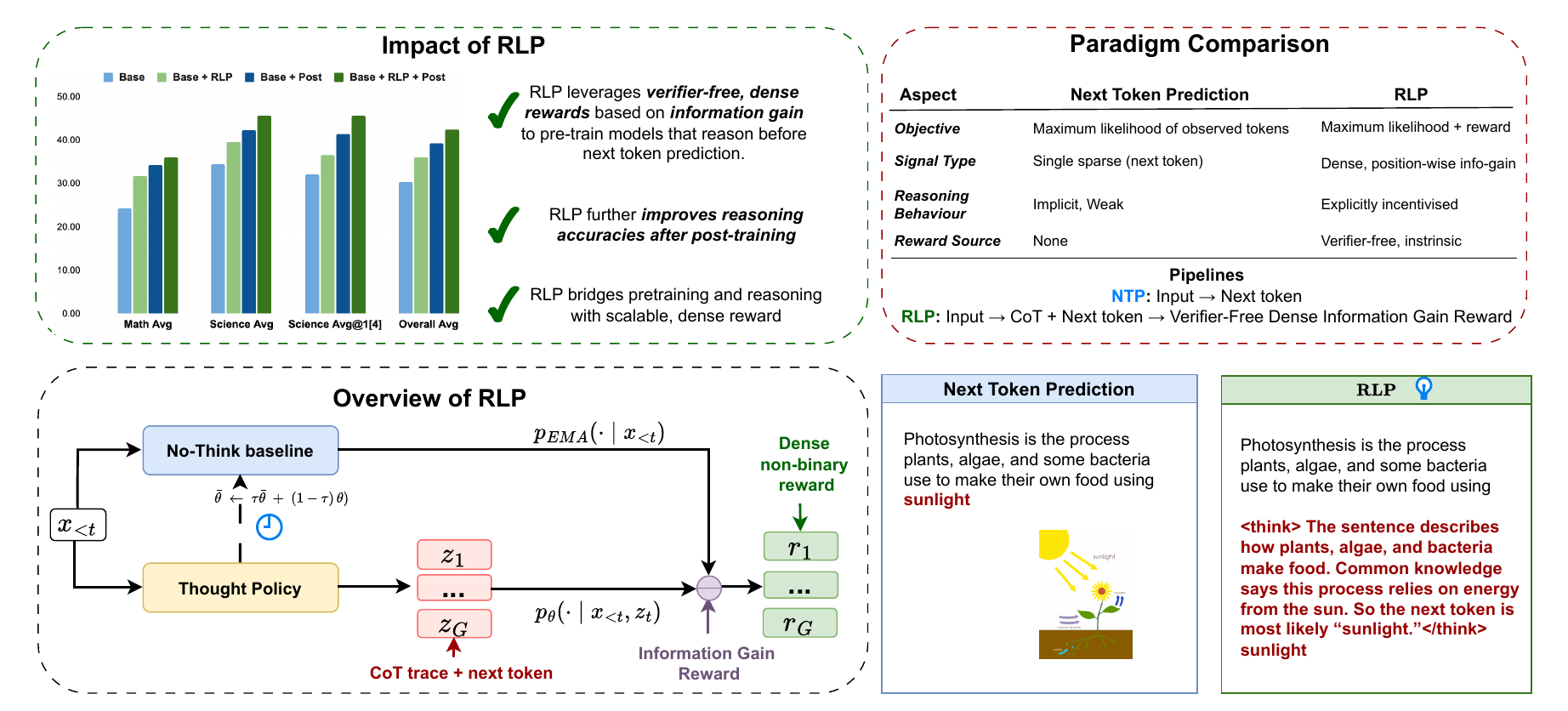}
  \caption{\textbf{Visualization of the \ours{} framework.} A chain-of-thought is sampled \emph{before} next-token prediction. Rewards are computed by contrasting the predictor conditioned on the CoT with a \textit{No-think} EMA baseline, yielding a verifier-free, dense signal. We list the advantages of \ours{} over the traditional pretraining objective (top right) and show the impact after end-to-end training (top left).}
  \label{fig:pipeline}
  \vspace{-1.em}
\end{figure}

Our empirical validation is comprehensive, assessing the efficacy of \ours along four key axes. 
First, we evaluate its performance relative to traditional next-token prediction baselines. 
On the \qwen model, \ours{} outperforms continuous pretraining by $+17\%$ and RPT by nearly $+4\%$.
We show the advantage persists even when the baseline uses \textbf{$35\times$ more data} to match FLOPs, confirming the gains arise from methodology rather than compute.
Second, we demonstrate the robustness of these improvements, showing they are not transient. As shown in Fig.\ref{fig:pipeline}, when subjected to an identical, strong post-training regimen, the foundational advantages of \ours{} \textbf{compound}, allowing our final model to surpass its conventionally trained counterparts by a significant 7--8\% margin. 
Third, unlike methods requiring narrow, curated datasets, \ours{} successfully extracts a powerful reasoning signal from diverse, general-purpose web corpora--establishing its \textbf{versatility across data domains} (Table~\ref{tab:diverse_data}). 
Finally, we confirm its scalability and architecture-agnostic power. When applied to a 12B hybrid Mamba-Transformer (\nemo), \ours{} achieves a staggering 
\textbf{35\% relative improvement} over a heavily trained baseline while using just 0.125\% of the data---a testament to its remarkable data efficiency and broad applicability across \llm families and sizes.


\newcommand{\CE}{\mathrm{CE}}
\newcommand{\sg}{\operatorname{sg}}         
\crefname{proposition}{Proposition}{Propositions}
\Crefname{proposition}{Proposition}{Propositions}

\section{Methodology}
\label{sec:method}


We introduce \ours{}, a pretraining-time procedure that explicitly induces reasoning. As illustrated in \Cref{fig:pipeline}, \ours{} inserts a short Chain-of-Thought (CoT) \emph{before} next-token prediction and measures how much that thought improves the model’s log-probability of the observed token relative to a no-think baseline. This improvement, which is a log-likelihood ratio, is a verifier-free, dense reward available at every position in ordinary text corpora. By valuing thoughts in proportion to their predictive benefit, \ours{} turns reinforcement \emph{pretraining} into learning to think on the same data used for standard next-token training.

\paragraph{Parameterization and roles.}
We separate the components for clarity:
\vspace{-0.5em}
\begin{itemize}
  \item \textbf{Thought policy / predictor} $\pi_{\theta}(c_t \mid x_{<t})$ and $p_{\theta}(x_t \mid x_{<t}, c_t)$ share \emph{exactly the same} network and parameters $\theta$. The network first samples a CoT $c_t$ and then, conditioned on the concatenated prefix $(x_{<t}, c_t)$, scores the next token $x_t$.
  \item \textbf{No-think baseline} $\bar p_{\phi}(x_t \mid x_{<t})$ (parameters $\phi$) is an EMA teacher of the current network used to score the same token without any CoT channel.
\end{itemize}
Thus, there is a single model that both \emph{generates} the thought and \emph{predicts} the next token given that thought; the EMA teacher provides the no-think counterfactual.

\paragraph{Classical next-token objective.}
Given a text sequence $\vx=(x_0,\dots,x_T)$ and position $t$, the standard next-token objective for a predictor $q_{\eta}$ is
\begin{equation}
  \label{eq:ntp}
  \mathcal{L}_{\text{NTP}}(\eta)
  \;:=\;
  \underset{(x_{<t},x_t)\sim \mathcal{D}}{\E}\big[\log q_{\eta}(x_t \mid x_{<t})\big].
\end{equation}
For distributions $p$ and $q$ on the next token, we define Cross-entropy (CE) as
\begin{equation}
  \label{eq:ce-def}
  \CE(p,q) \;\stackrel{\mathrm{def}}{=}\; \underset{x\sim p}{\E}\!\big[-\log q(x)\big].
\end{equation}
Using $p^\ast(\cdot\mid x_{<t})$ for the data distribution over $x_t$, maximizing \eqref{eq:ntp} is equivalent to minimizing $\E_{x_{<t}\sim\mathcal{D}}\!\big[\CE\!\big(p^\ast,\,q_\eta(\cdot\mid x_{<t})\big)\big]$.
\emph{We include \eqref{eq:ntp} only for context as our training \textbf{does not} include a standard NTP loss term.} Instead, \ours{} optimizes an information-gain objective defined below and updates parameters \emph{only through the tokens of the sampled thoughts}. 


\subsection{Reasoning as an action}
\label{subsec:action}
\ours{} augments next-token prediction with a sampled thought. 
At each position $t$, the policy draws a latent CoT random variable
\[
  z_t \sim \pi_\theta(\cdot \mid x_{<t}),
\]
and we write $c_t$ for its realization. 
The network then predicts $x_t$ with the \emph{reasoned} scorer $p_\theta(\cdot \mid x_{<t}, c_t)$. As a no-think counterfactual we use $\bar p_\phi(\cdot \mid x_{<t})$, the EMA teacher queried on the same context without providing the CoT.

\paragraph{EMA teacher instantiation and schedule.}
We instantiate the EMA teacher to match the current model on the \emph{first} batch ($\phi \leftarrow \theta$), and thereafter update it \emph{after} each optimizer step via
\[
  \phi \;\leftarrow\; \tau\,\phi + (1-\tau)\,\theta,\qquad \tau=0.999.
\]
This choice makes $\bar p_\phi$ a \emph{moving counterfactual} that is (i) \emph{current} enough to provide informative comparisons and (ii) \emph{intentionally lagged} to mitigate reward hacking. If the baseline were frozen, the counterfactual would drift too far from the evolving model; if it tracked the model without lag, the log-likelihood ratio would collapse toward zero and invite degenerate strategies. The post-update averaging yields a one-step-lagged, smoothed teacher that stabilizes training.

\subsection{Information-gain reward}
\label{subsec:reward}
With teacher forcing on the next token, define the reasoned and baseline log-evidence
\begin{align}
  \label{eq:spred_sema}
  S_{\text{pred}}(c_t) \;&:=\; \log p_\theta\!\big(x_t \,\big|\, x_{<t}, c_t\big), \\
  S_{\textsc{ema}} \;&:=\; \log \bar p_\phi\!\big(x_t \,\big|\, x_{<t}\big).
\end{align}
The \emph{information-gain} reward is the log-likelihood ratio
\begin{equation}
  \label{eq:ig}
  r(c_t)\;:=\;S_{\text{pred}}(c_t)\;-\;S_{\textsc{ema}},
\end{equation}
which compares the reasoned scorer with a no-think baseline on the observed next token. Rewards are computed under teacher forcing for each $t$. When updating the policy, we \emph{treat $r(c_t)$ as a constant with respect to $\theta$} (no backpropagation through $p_\theta$ or $\bar p_\phi$); see \Sref{subsec:optimization}.

\subsection{Expected improvement identity}
\label{subsec:identity}
\begin{proposition}[CE reduction]
\label{prop:ce-reduction}
For any fixed $(x_{<t},c_t)$,
\[
  \underset{x_t\sim p^\ast}{\E}[r(c_t)]
  \;=\; \CE\!\big(p^\ast,\bar p_\phi(\cdot \mid x_{<t})\big)
  \;-\; \CE\!\big(p^\ast,p_\theta(\cdot \mid x_{<t},c_t)\big).
\]
\end{proposition}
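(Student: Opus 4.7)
The plan is to prove this identity by direct definitional calculation, because the claim is essentially an immediate consequence of linearity of expectation together with the definitions of the reward $r(c_t)$ and the cross-entropy $\CE$. There is no nontrivial probabilistic or analytic content to exploit beyond what is already encoded in the statements \eqref{eq:spred_sema}, \eqref{eq:ig}, and \eqref{eq:ce-def}.

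The argument I would write has three short steps. First, I would substitute \eqref{eq:ig} into the left-hand side and expand using \eqref{eq:spred_sema}, producing
\[
  \E_{x_t \sim p^\ast}[r(c_t)]
  \;=\; \E_{x_t \sim p^\ast}\!\bigl[\log p_\theta(x_t \mid x_{<t}, c_t)\bigr]
       - \E_{x_t \sim p^\ast}\!\bigl[\log \bar p_\phi(x_t \mid x_{<t})\bigr],
\]
where the separation is justified because $(x_{<t},c_t)$ is held fixed by hypothesis, so both $p_\theta(\cdot \mid x_{<t},c_t)$ and $\bar p_\phi(\cdot \mid x_{<t})$ are deterministic distributions on the token alphabet and only $x_t$ is random. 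Second, I would apply \eqref{eq:ce-def} in reverse to each term, using the identification $\E_{x\sim p^\ast}[-\log q(x)] = \CE(p^\ast, q)$ for $q \in \{p_\theta(\cdot\mid x_{<t},c_t),\; \bar p_\phi(\cdot\mid x_{<t})\}$. Third, I would flip the signs produced by this identification to arrive exactly at the right-hand side of the proposition.

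The main obstacle — if one can be said to exist at all — is conceptual rather than technical: one must be careful that $p^\ast$ denotes the \emph{data} conditional $p^\ast(\cdot \mid x_{<t})$ over the next token and \emph{not} a model distribution, and that $c_t$ enters purely as an additional fixed conditioning argument to $p_\theta$ rather than as a random draw from $\pi_\theta$ (the expectation over $z_t$ plays no role here and would appear only if one iterated the identity outward). Once these conventions are made explicit, no regularity hypotheses beyond finiteness of the log-probabilities are required — which is automatic for the full-support softmax predictors used in practice — and the identity reduces to a two-line rearrangement of definitions.
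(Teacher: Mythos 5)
Your proposal is correct and matches the paper's own proof: both substitute the definitions in \eqref{eq:spred_sema} and \eqref{eq:ig}, split the expectation by linearity with $(x_{<t},c_t)$ fixed, and identify each term as a negative cross-entropy via \eqref{eq:ce-def}. No meaningful differences.
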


\noindent where $p^\ast(\cdot \mid x_{<t})$ is the data distribution over $x_t$. Maximizing the expected reward therefore maximizes the predictive usefulness of the thought for the next token.

\begin{proposition}[Lower bound via marginalization over thoughts]
\label{prop:lower-bound}
Let $\pi_\theta(z_t \mid x_{<t})$ be the distribution over CoTs and define the collapsed predictor
\[
  \tilde p_\theta(x \mid x_{<t}) \;=\; \underset{z_t\sim\pi_\theta(\cdot\mid x_{<t})}{\E}\!\big[p_\theta(x \mid x_{<t},z_t)\big].
\]
Then for any realized $x_t$,
\[
  \underset{c_t\sim\pi_\theta}{\E}\!\big[S_{\text{pred}}(c_t)\big]
  \;\le\; \log \tilde p_\theta(x_t \mid x_{<t}),
  \quad\text{and}\quad
  J(\theta)\;=\;\E[r(c_t)]
  \;\le\; \E\!\left[\log \frac{\tilde p_\theta(x_t \mid x_{<t})}{\bar p_\phi(x_t \mid x_{<t})}\right].
\]
\end{proposition}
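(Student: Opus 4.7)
The plan is to derive both inequalities as direct consequences of Jensen's inequality applied to the concavity of $\log$, with the thought variable $z_t \sim \pi_\theta(\cdot \mid x_{<t})$ providing the randomization. No new assumption beyond those in \Cref{subsec:reward} is needed; in particular, the EMA baseline $\bar p_\phi(x_t \mid x_{<t})$ does not depend on $c_t$, so it acts as an additive constant throughout the argument.

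First I would establish the pointwise bound. Fix a prefix $x_{<t}$ and observed token $x_t$, and treat $p_\theta(x_t \mid x_{<t}, z_t)$ as a nonnegative random variable in $z_t$. Applying Jensen's inequality to the concave function $\log$ gives
\[
  \underset{c_t \sim \pi_\theta}{\E}\!\big[\log p_\theta(x_t \mid x_{<t}, c_t)\big] \;\le\; \log\, \underset{c_t \sim \pi_\theta}{\E}\!\big[p_\theta(x_t \mid x_{<t}, c_t)\big] \;=\; \log \tilde p_\theta(x_t \mid x_{<t}),
\]
where the final equality uses the definition of $\tilde p_\theta$ supplied in the statement. By \eqref{eq:spred_sema} the left-hand side is precisely $\E_{c_t\sim\pi_\theta}[S_{\text{pred}}(c_t)]$, proving the first claim.

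Next I would obtain the bound on $J(\theta)$ by taking a second, outer expectation. From \eqref{eq:ig}, $r(c_t) = \log p_\theta(x_t \mid x_{<t}, c_t) - \log \bar p_\phi(x_t \mid x_{<t})$, and the baseline term is constant in $c_t$. Iterating the expectation---first over $c_t$ conditional on $(x_{<t},x_t)$, then over the data---yields
\[
  J(\theta) \;=\; \E\!\Big[\,\underset{c_t \sim \pi_\theta}{\E}\!\big[\log p_\theta(x_t \mid x_{<t}, c_t)\big] \;-\; \log \bar p_\phi(x_t \mid x_{<t})\,\Big].
\]
Substituting the pointwise bound and using monotonicity of the outer expectation gives the stated inequality $J(\theta) \le \E\!\big[\log\!\big(\tilde p_\theta(x_t \mid x_{<t}) / \bar p_\phi(x_t \mid x_{<t})\big)\big]$.

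The main obstacle is not algebraic but bookkeeping: one must be careful that the inner expectation is taken conditional on $x_{<t}$ with the teacher-forced $x_t$ treated as fixed at that step, while both $x_{<t}$ and $x_t$ are random under the outer data expectation; the pointwise Jensen bound must be applied \emph{inside} this outer integral so that monotonicity is legitimate. Once this conditioning is explicit, Jensen closes the argument. For intuition I would append a brief remark on tightness---equality in Jensen holds iff $p_\theta(x_t \mid x_{<t}, c_t)$ is almost-surely constant in $c_t$---which shows that the bound is generically strict and becomes tight only when the thought channel carries no predictive information about $x_t$, motivating $\E[\log(\tilde p_\theta/\bar p_\phi)]$ as a tractable, usually conservative, surrogate for $J(\theta)$.
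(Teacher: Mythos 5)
Your proposal is correct and follows essentially the same route as the paper's proof: Jensen's inequality for the concave $\log$ gives the pointwise bound on $\E_{c_t}[S_{\text{pred}}(c_t)]$, and since the EMA baseline term is constant in $c_t$, taking the outer expectation over $(x_{<t},x_t)\sim\mathcal{D}$ and invoking monotonicity yields the bound on $J(\theta)$. Your tightness remark (equality iff $p_\theta(x_t\mid x_{<t},c_t)$ is almost surely constant in $c_t$) also matches the paper's closing observation.
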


\noindent The CoT-conditioned objective is thus a computable lower bound on the improvement one would obtain after marginalizing thoughts. Refer to ~\Sref{sec:appendix_proof} of the appendix for the proofs of the propositions.

\subsection{\ours{} objective and optimization}
\label{subsec:optimization}
\ours{} optimizes the thought policy to produce thoughts that \emph{increase} predictive evidence. Our training \emph{does not} include the standard next-token loss in \eqref{eq:ntp}. Instead, we optimize only the information-gain objective
\begin{equation}
  \label{eq:objective}
  \max_{\theta}\; J(\theta)
  \;=\; \underset{x_{<t}\sim\mathcal{D}}{\E}\;\underset{c_t\sim\pi_\theta(\cdot\mid x_{<t})}{\E}\!\big[\,r(c_t)\,\big],
\end{equation}
or, equivalently, we \emph{minimize} the negative information-gain loss $\mathcal{L}_{\text{IG}}(\theta)=-J(\theta)$.
Gradients are applied only to the \emph{thought tokens}; $r(c_t)$ is treated as a constant (no backpropagation through $p_\theta$ or $\bar p_\phi$) .

\paragraph{Group-relative baseline (inclusive mean with correction).}
To reduce variance, for each context we sample $G\!\ge\!2$ thoughts $\{c_t^{(i)}\}_{i=1}^G$ and use a corrected inclusive mean baseline. Let
\[
  \bar r \;=\; \frac{1}{G}\sum_{j=1}^G r\!\big(c_t^{(j)}\big).
\]
We define the advantages
\begin{equation}
  \label{eq:group-adv}
  A^{(i)} \;:=\; \frac{G}{G-1}\Big(r\!\big(c_t^{(i)}\big) - \bar r\Big),
  \qquad\text{with no gradient propagated through $\bar r$.}
\end{equation}
This multiplicative factor removes the $\big(1-\tfrac{1}{G}\big)$ shrinkage inherent to the inclusive mean, yielding an unbiased estimator with low variance.

\paragraph{Per-token importance ratios and clipped surrogate.}
We update the log-probability of the \emph{thought} tokens with a clipped surrogate. Let $\ell_{u}^{(i)}$ be the $u$-th token in $c_t^{(i)}$ and $\mathrm{prefix}_{u}^{(i)}=(x_{<t},\ell_{1:u-1}^{(i)})$. With behavior parameters $\theta_{\text{old}}$ used to sample the thoughts, define the per-token importance ratio
\[
  \rho_{u}^{(i)}=\exp\!\Big(\log\pi_\theta(\ell_{u}^{(i)}\mid \mathrm{prefix}_{u}^{(i)})-\log\pi_{\theta_{\text{old}}}(\ell_{u}^{(i)}\mid \mathrm{prefix}_{u}^{(i)})\Big).
\]
We write $\clip(\rho;1-\epsilon_\ell,1+\epsilon_h)$ for elementwise clipping and denote stop-gradient by $\sg(\cdot)$. The surrogate loss is
\begin{equation}
  \label{eq:clip}
  \mathcal{L}_{\text{clip}}(\theta)
  \;=\; -\,\E\!\left[\frac{1}{|c_t^{(i)}|}\sum_{u}\min\!\Big(\rho_{u}^{(i)}\,\sg(A^{(i)}),\ \clip(\rho_{u}^{(i)};1-\epsilon_\ell,1+\epsilon_h)\,\sg(A^{(i)})\Big)\right].
\end{equation}

\begin{algorithm}[t]
\caption{\ours{} for next‑token prediction with information gain}
\label{alg:rlpt-algo}
\begin{algorithmic}[1]
\State \textbf{Inputs:} dataset $\mathcal{D}$, group size $G\!\ge\!2$, clipping $(\epsilon_\ell,\epsilon_h)$, EMA decay $\tau\in(0,1)$, learning rate $\eta$.
\State \textbf{Model:} a single network with parameters $\theta$ used both as
(i) thought policy $\pi_\theta$ and (ii) reasoned predictor $p_\theta$;
EMA baseline $\bar p_\phi$.
\State \textbf{Initialization:} mark $\phi$ as uninitialized.
\While{training}
  \State Set the behavior snapshot $\theta_{\text{old}} \leftarrow \theta$. \Comment{used for the current sampling pass}
  \State Sample minibatch $\{(x_{<t}^{(b)},x_t^{(b)})\}_{b=1}^{B}\sim\mathcal{D}$.
  \State For each $b$, sample $G$ thoughts $c_t^{(b,i)}\!\sim\!\pi_{\theta_{\text{old}}}(\cdot\mid x_{<t}^{(b)})$ with $|c_t^{(b,i)}|\ge 1$.
  \If{$\phi$ is uninitialized} \State $\phi \leftarrow \theta$ \Comment{lazy init of EMA teacher} \EndIf
  \State Compute baseline log-evidence (teacher forcing, no grad) $S_{\textsc{ema}}^{(b)}$ as per \eqref{eq:spred_sema}.
  \State Compute reasoned log-evidence $S_{\text{pred}}^{(b,i)}$ and rewards $r^{(b,i)}$ as per \eqref{eq:spred_sema} and \eqref{eq:ig}.
  \State Group baseline $\bar r^{(b)}$ and $A^{(b,i)}$ (inclusive mean with correction; $\sg$ is stop-grad) as per \eqref{eq:group-adv}.
  \State Per-token importance ratios and clipped surrogate for $\ell_{u}^{(b,i)}$ with prefix $\mathrm{prefix}_{u}^{(b,i)}$:
  \Statex \hspace{1.1em} $\rho_{u}^{(b,i)}=\exp\!\Big(\log\pi_\theta(\ell_{u}^{(b,i)}\mid \mathrm{prefix}_{u}^{(b,i)})-\log\pi_{\theta_{\text{old}}}(\ell_{u}^{(b,i)}\mid \mathrm{prefix}_{u}^{(b,i)})\Big)$.
  \Statex \hspace{1.1em} $L_{\text{clip}}^{(b,i)}=
        -\frac{1}{|c_t^{(b,i)}|}\sum_{u}\min\!\Big(\rho_{u}^{(b,i)}\,\sg\!\big(A^{(b,i)}\big),\ \clip(\rho_{u}^{(b,i)};1-\epsilon_\ell,1+\epsilon_h)\,\sg\!\big(A^{(b,i)}\big)\Big)$.
  \State Policy update on thought tokens:
  \Statex \hspace{1.1em} $\mathcal{L}(\theta)=\frac{1}{BG}\sum_{b=1}^{B}\sum_{i=1}^{G} L_{\text{clip}}^{(b,i)}$, \quad
      $\theta\leftarrow\theta-\eta\nabla_\theta \mathcal{L}(\theta)$.
  \State EMA update of baseline: $\phi\leftarrow\tau\,\phi+(1-\tau)\,\theta$.
\EndWhile
\State \textbf{Output:} trained policy/predictor (shared $\theta$) and EMA baseline $\phi$.
\end{algorithmic}
\end{algorithm}

\subsection{Reward properties and guarantees}
\label{subsec:properties}
\paragraph{Does thinking actually help?}
The reward $r(c_t)$ is positive exactly when the model that used the sampled thought assigns higher probability to the observed next token than the EMA baseline that did not think. In expectation over the data distribution, this equals the reduction in cross-entropy between the reasoned scorer and the no-think baseline (Prop.~\ref{prop:ce-reduction}).

\paragraph{Positionwise credit at every step.}
Since the task is next-token prediction, the reward is computed independently at each position $t$ as
\[
  r(c_t) \;=\; \log p_\theta(x_t \mid x_{<t}, c_t)\;-\;\log \bar p_\phi(x_t \mid x_{<t}).
\]
Credit is attached exactly where the thought changes predictive probability, yielding one scalar per token and removing the need for a learned value function or any external verifier.

\paragraph{Putting it all together.}
Algorithm~\ref{alg:rlpt-algo} composes the above pieces into a single training loop. Specifically, multiple thoughts are sampled per position and information-gain rewards are computed against a moving EMA counterfactual. Group-relative advantages are formed and the shared network is updated \emph{only} on the thought tokens via the clipped surrogate in \eqref{eq:clip}. In this case, the improvements originate from learning to generate thoughts that systematically raise predictive evidence.

\sisetup{
  detect-weight = true,
  detect-family = true,
  group-digits  = false,
  round-mode    = places,
  round-precision = 2,
  table-number-alignment = center
}
\newcolumntype{L}{>{\raggedright\arraybackslash}X}

\section{Experimental Setup}
\label{sec:setup}

We experiment with \qwen \citep{yang2025qwen3} and then scale our experiments to a larger \nemo \citep{nano2025efficient} model.\footnote{Details about hyper-parameters for each of the below phases and the prompt used for \ours{} can be found in Appendix \ref{sec:appendix-expt-setup}.}

\paragraph{\ours{}.} We apply \ours on a diverse set of datasets across two settings: (i) \textit{SFT-style reasoning corpora}, including a math-centric set (\om \citep{gao2024omnimathuniversalolympiadlevel}) and mixed math + general-reasoning sets (\ot \citep{guha2025openthoughtsdatarecipesreasoning}, \nc \citep{akter2025nemotroncrossthinkscalingselflearningmath}); and (ii) \textit{general-purpose pretraining corpora}, covering academic papers (\edu), math textbooks (\ccmath), and open-ended web pages QA pairs from Common Crawl (\dqa)\citep{nano2025efficient}. 
We train with \ours for 1B input tokens using general pretraining corpora ($\mathcal{D}_{\mathrm{PT}
}$) to evaluate its effect in an end-to-end \llm pretraining pipeline. We denote this model as \mours.
Note that theoretically \ours{} can be applied for every token in a document but in our experiments we randomly select one token per document. Hence, the number of tokens for which the reward is applied is far less than 1B.


\paragraph{Continuous Pretraining.} To ensure compute equivalent comparison with \mours, we do continuous pretraining on the base model denoted by \mbase with the same tokens used in \ours. We denote this model as \mcpt. This serves as an additional baseline for our experiments.


\paragraph{Post-Training.} All models undergo a SFT stage on OpenThoughts data \citep{guha2025openthoughtsdatarecipesreasoning}.
To further enhance, we apply Reinforcement Learning with Verifier Rewards (RLVR) using MATH dataset \citep{hendrycks2021measuring}. 
This two-stage post-training pipeline provides an evaluation framework to verify that gains from \ours persist under strong alignment, while also revealing how much additional improvement can be achieved through subsequent post-training.
For consistency, all models are trained with identical SFT and RLVR receipes, ensuring that any observed differences in downstream accuracies can be attributed to the pretraining condition (\mbase vs \mcpt vs \mours).

\begin{table}[t]
\centering
\setlength{\tabcolsep}{3.5pt}
\renewcommand{\arraystretch}{1.1}
\footnotesize

\begin{adjustbox}{max width=\linewidth}
\begin{tabularx}{\linewidth}{
  L
  S[table-format=2.2]
  S[table-format=2.2]
  >{\bfseries\columncolor{gray!8}}S[table-format=2.2]
  S[table-format=2.2]
  S[table-format=2.2]
  >{\bfseries\columncolor{gray!15}}S[table-format=2.2]
}
\toprule
\textbf{Benchmark} &
\multicolumn{1}{c}{\textbf{\mbase}} &
\multicolumn{1}{c}{\textbf{\mcpt}} &
\multicolumn{1}{>{\columncolor{gray!8}}c}{\textbf{\mours}} &
\multicolumn{1}{c}{\textbf{\mbase+Post}} &
\multicolumn{1}{c}{\textbf{\mcpt+Post}} &
\multicolumn{1}{>{\columncolor{gray!15}}c}{\textbf{\mours+Post}} \\
\midrule
AIME25              &  2.25 &  3.96 &  5.02 &  5.32 &  5.89 &  7.05 \\
MATH500             & 48.45 & 57.52 & 58.48 & 61.92 & 62.70 & 64.30 \\
GSM8K               & 54.16 & 72.85 & 74.48 & 78.22 & 78.70 & 80.50 \\
AMC23               & 25.94 & 31.25 & 31.25 & 35.00 & 34.38 & 36.50 \\
Minerva             & 15.30 & 19.03 & 21.19 & 25.30 & 26.10 & 27.80 \\
MMLU                & 50.08 & 41.95 & 56.14 & 58.36 & 59.00 & 61.50 \\
MMLU@1[4]           & 44.85 & 40.00 & 52.18 & 56.00 & 58.53 & 61.00 \\
MMLU-Pro            & 28.17 & 27.81 & 34.62 & 37.85 & 39.92 & 42.40 \\
MMLU-Pro@1[4]       & 23.95 & 24.61 & 30.80 & 36.53 & 38.49 & 41.30 \\
GPQA                & 25.25 & 26.26 & 28.28 & 30.93 & 29.27 & 33.33 \\
GPQA@1[4]           & 27.52 & 24.75 & 27.02 & 31.52 & 30.01 & 34.97 \\
\midrule
Math Avg            & 24.35 & 30.77 & 31.74 & 34.29 & 34.63 & 36.03 \\
Science Avg         & 34.50 & 32.01 & 39.68 & 42.38 & 42.73 & 45.74 \\
Science Avg@1[4]    & 32.11 & 29.79 & 36.67 & 41.35 & 42.34 & 45.76 \\
\midrule
\textbf{Overall}    & 30.32 & 30.85 & {\bfseries 36.03} & 39.34 & 39.90 & {\bfseries 42.51} \\
\bottomrule
\end{tabularx}
\end{adjustbox}

\caption{Quantitative benchmarks for Qwen3-1.7B-Base, showing the impact of RLP.
Shaded columns indicate RLP variants; “Post” indicates SFT + RLVR post-training.}
\label{tab:qwen-main-res}
\vspace{-2.5mm}
\end{table}

\subsection{Evaluation Metrics}

We conduct a thorough benchmark assessment using a series of tasks using NeMo-Skills\footnote{\url{https://github.com/NVIDIA/NeMo-Skills}}.

\textbf{Math Reasoning (\textsc{math avg}).} We consider four diverse math benchmarks : GSM8K \citep{cobbe2021trainingverifierssolvemath}, MATH-500 \citep{hendrycksmath2021}, Minerva Math \citep{lewkowycz2022solving}, AMC23. We report Pass@1 average of 8 runs for these.

\textbf{Science Reasoning (\textsc{science avg}).} For conceptual science and specialized knowledge, we evaluate on MMLU~\citep{hendrycks2021measuringmassivemultitasklanguage}, MMLU-Pro~\citep{wang2024mmluprorobustchallengingmultitask}, and the graduate-level STEM benchmark GPQA-Diamond~\citep{rein2024gpqa}. For science benchmarks, we report the average greedy and Pass@1 scores from 4 runs (\textsc{science avg}@1[4]).




\section{Results}
\label{sec:results}

Table \ref{tab:qwen-main-res} reports the performance of \qwen under different pretraining and post-training objectives. First, \ours consistently outperforms both the \mbase and \mcpt across nearly all benchmarks, with especially strong gains on reasoning-heavy tasks such as AIME25 and MMLU-Pro. 
We see that \mours is relatively on average 19\% and 17\% better than \mbase and \mcpt respectively. This highlights the effectiveness of dense, verifier-free reinforcement signals for instilling reasoning capabilities during pretraining. Second, the benefits of \ours persist even after strong post-training (SFT + RLVR).
While all models improve after post-training, \mours achieves the highest scores with the overall average substantially higher than both \mbase by 8\% and \mcpt by 7\% relatively.
This indicates that \ours establishes robust reasoning foundations that are not washed out by downstream alignment but instead compound with post-training. We observe particularly large gains in science domains, with \mours+Post achieving $+3$ points over \mcpt+Post. This trend suggests that \ours is not limited to mathematical reasoning but also generalizes effectively to other domains. The ability to strengthen performance in science benchmarks highlights that \ours fosters a broader class of multi-step explanation-driven reasoning skills, moving beyond domain-specific improvements and pointing toward a more versatile foundation for reasoning in LLMs. Overall, the results demonstrate that \ours not only induces reasoning ability during pretraining but also synergizes with post-training, leading to models with stronger and more durable reasoning abilities than those trained with next-token prediction or continuous pretraining.

\begin{table}[t]
\centering
\setlength{\tabcolsep}{4pt}
\renewcommand{\arraystretch}{1.1}
\footnotesize

\begin{adjustbox}{max width=\linewidth}
\begin{tabularx}{\linewidth}{
  L
  S[table-format=2.2]
  >{\bfseries\columncolor{gray!8}}S[table-format=2.2]
  S[table-format=2.2]
  >{\bfseries\columncolor{gray!15}}S[table-format=2.2]
}
\toprule
\textbf{Benchmark} &
\multicolumn{1}{c}{\textbf{\mbase}} &
\multicolumn{1}{>{\columncolor{gray!8}}c}{\textbf{\textbf{\mours}}} &
\multicolumn{1}{c}{\textbf{\mbase+Post}} &
\multicolumn{1}{>{\columncolor{gray!15}}c}{\textbf{\mours+Post}} \\
\midrule
MATH500             & 79.95 & 78.68 & 83.47 & 87.05 \\
GSM8K               & 72.31 & 85.98 & 94.22 & 94.90 \\
AMC23               & 70.63 & 57.19 & 62.19 & 75.00 \\
Minerva             & 22.61 & 39.48 & 40.76 & 42.78 \\
MMLU                & 54.12 & 78.76 & 73.55 & 78.17 \\
MMLU@1[4]           & 48.01 & 79.48 & 75.23 & 77.90 \\
MMLU-Pro            & 24.16 & 53.13 & 61.78 & 67.38 \\
MMLU-Pro@1[4]       & 27.13 & 55.76 & 73.21 & 66.96 \\
GPQA                & 25.25 & 39.90 & 41.41 & 48.00 \\
GPQA@1[4]           & 22.47 & 48.86 & 52.15 & 49.62 \\
\midrule
Math Avg            & 61.38 & 65.33 & 70.16 & 74.93 \\
Science Avg         & 34.51 & 57.26 & 58.91 & 64.52 \\
Science Avg@1[4]    & 32.54 & 61.37 & 66.86 & 64.83 \\
\midrule
\textbf{Overall}    & 42.81 & {\bfseries 61.32} & 65.31 & {\bfseries 68.09} \\
\bottomrule
\end{tabularx}
\end{adjustbox}

\caption{Quantitative benchmarks for \nemo, showing the impact of RLP.
Shaded columns indicate RLP variants; “Post” indicates SFT + RLVR post-training.}
\label{tab:nemo-main-res-updated}
\end{table}

\paragraph{Scaling Model Size and Architecture} 
We further scale \ours to \nemo \citep{nano2025efficient} (\mbase), a hybrid Mamba-Transformer language model of 12B parameter size.
In this comparison we take an intermediate checkpoint of \nemo trained till $19.8$ trillion tokens and apply \ours for $250$ million tokens only.
\mbase on the other hand is trained for $20$ trillion tokens. In addition, we employ an identical post‑training pipeline (SFT $\rightarrow$ RLVR), mirroring the setup used for \qwen in Table~\ref{tab:qwen-main-res}. The results as shown in \autoref{tab:nemo-main-res-updated} confirms that, regardless of model size and families, RLP not only yields a very large improvement at the base stage (Overall 42.81\% to 61.32\%, a 43\% relative gain) but that these gains persist and continue to compound after strong post training. After SFT + RLVR, the RLP trained model improves from 61.32\% to 68.09\%, maintaining a clear margin over the compute matched baseline (65.31\%). The largest relative gains are in scientific reasoning as the Science Avg rises from 34.51\% to 57.26\% at the base stage and further to 64.52\% after post training, compared to 58.91\% for the continuously pretrained baseline. This pattern closely mirrors our findings on Qwen3, and demonstrates that RLP scales effectively both to larger parameter counts and to a different architecture family, while remaining compatible with strong downstream alignment. Furthermore, we validate that RLP scales effectively to larger backbones by applying it to \newqwen, where it improves the overall average from 60.66\% to 65.00\% after training on 1B tokens, with particularly strong gains in scientific reasoning. Full results for this scaling experiment are provided in the Appendix~\ref{app:add_abl}.

\paragraph{RPT Comparison}
Following the experimental setup in RPT \citep{dong2025reinforcement}, we trained \mbase on both RPT and \ours methods for one epoch under tokens and flop matched compute budgets before evaluating on our benchmark suite. In the token matched setting, we trained both models on Omni‑MATH \citep{gao2024omnimathuniversalolympiadlevel} using the same number of input tokens. 
\begin{wraptable}[12]{r}{0.55\linewidth}
\vspace{-3.5pt}
\centering
\footnotesize
\setlength{\tabcolsep}{6pt}
\begin{tabular}{l
                S[table-format=2.2]
                S[table-format=2.2]
                S[table-format=2.2]}
\toprule
\textbf{Model} & \textbf{Math Avg} & \textbf{Science Avg} & \textbf{Avg} \\
\midrule
\textit{Token-Matched} \\
\mrpt           & 47.50 & 35.88 & 41.69 \\
\rowcolor{gray!15}
\textbf{\mours{}} & \bfseries 49.62 & \bfseries 37.07 & \bfseries 43.35 \\\midrule
\textit{Flop-Matched} \\
\mrpt           & 36.66 & 34.38 & 35.68 \\
\rowcolor{gray!15}
\textbf{\mours{}} & \bfseries 45.95 & \bfseries 38.76 & \bfseries 42.86 \\
\addlinespace[2pt]
\bottomrule
\end{tabular}
\caption{Token- and flop-matched comparisons of RLP and RPT using a \qwen model.}
\label{tab:nano_comp}
\end{wraptable} As we apply \ours to a single token per document while RPT is applied on multiple tokens per document, the number of target tokens for which reward is applied is substantially larger for RPT. 

Conversely, for the flop-matched training, both models are trained on Nemotron-CrossThink (as detailed in Appendix \ref{app:add_abl}) for one epoch on the same data, and we ensure that the number of target tokens for which reward is applied is same in both training runs. 
As summarized in Table \autoref{tab:nano_comp}, under \textit{Token-Matched} setting, \ours achieves uniformly higher aggregates: \emph{Math Avg} improves by an absolute +2.12\% (+4.5\% relative), \emph{Science Avg} by +1.19\%(+3.3\% relative), and \emph{Overall Avg} boosts by +1.66\% (+4.0\% relative). In addition, under the flop matched setting, the improvement is even more prominent. \ours achieves a 20.12\% relative improvement on average over RPT. Methodologically, RPT applies reinforcement only to tokens pre‑selected by an auxiliary assistant via entropy filtering and optimizes a sparse, binary next‑token correctness signal that ignores the CoT content, limiting where the signal can be applied. In contrast, \ours evaluates each sampled CoT by the information gain it provides for the observed next token and updates at all positions without an auxiliary filter which yields consistently better averages under the matched setting above. Crucially, this dense, per‑token information‑gain reward supplies richer credit assignment than RPT’s sparse binary signal and, in our matched experiments, empirically yields better performance.

\section{Ablations}
\label{sec:ablations}

\paragraph{Does \ours provide generalizable improvements across diverse corpora?} 
A key advantage of 
\ours is its scalability to large, diverse corpora, unlike RLVR, which relies on small, curated reasoning datasets and raises concerns about generalizability. Prior work \citep{chen2025acereasonnemotronadvancingmathcode, setlur2025e3learningexploreenables} highlights the need for complex reasoning corpora to sustain RL improvements, but such datasets are costly to curate and impractical at pretraining scale.
For these ablations, we apply \ours to \qwen for 200 steps---utilizing 170M input tokens---holding the rest of the setup fixed.

As illustrated in \autoref{tab:diverse_data}, \ours delivers consistent gains across all corpus families, eliminating concerns that RL based pretraining only benefits curated reasoning data. Relative to \mbase average improves by 7-9\% with strongest gains on \nc (SFT-style) and \dqa (general-purpose corpora). 
Unlike prior work \citep{akter2025nemotroncrossthinkscalingselflearningmath}, where RL gains were limited to math and weakened under mixed data, \ours achieves simultaneous improvements across all benchmarks, demonstrating genuine cross-domain transfer.
\begin{table}
\centering
\vspace{1mm}
\resizebox{\textwidth}{!}{%
\begin{tabular}{@{}lcccccc@{}}
\toprule
\multicolumn{1}{c}{\textbf{Model}} & \textbf{Dataset} & \textbf{Type} & \textbf{\shortstack{Math Avg}} & \textbf{Science Avg} & \textbf{\shortstack{Science Avg@1[4]}} & \textbf{Avg}\\ \midrule

\mbase &   - & - & 35.96	 & 34.50	&32.11	&34.19\\\midrule
\multirow{2}{*}{\mcpt} &   \nc [170M] & Equal Input Token  & 37.11	&35.76	&32.15	& 35.01\\
 &   \nc [6B] & Equal FLOPs & 43.90	& 37.74	& 32.47	& 38.04\\
 & $\mathcal{D}_{\mathrm{PT}}$[1B] & PT Data Mix &  45.34	&32.14	&29.33	& 35.60 \\\midrule
\multirow{6}{*}{\mours} & \om[170M] & \multirow{3}{*}{SFT}  & 46.48	&40.27	&37.54	&41.43 \\
& \ot[170M] & & 47.64	&40.84	&35.88	&41.45 \\
& \nc[170M] & & 49.76	& 42.54	& 37.78	& \textbf{43.36} \\\cmidrule{2-7}
& \edu[170M] & \multirow{3}{*}{General}& 47.68 &	40.59	&36.87	&41.71\\
& \ccmath[170M] & &48.07&40.46	&36.32	&41.62\\
& \dqa[170M] & &48.87	&40.75	&36.77	&\textbf{42.13} \\\cmidrule{2-7}
& $\mathcal{D}_{\mathrm{PT}}$[1B] & PT Data Mix & 46.35& 39.68& 36.67	& 40.90 \\
\toprule
\end{tabular}
}%
\caption{\textbf{\ours across diverse corpora.} \ours trained on six SFT-style and general-purpose datasets yields consistent gains
, indicating transferable reasoning from mixed/open-ended data.}
\label{tab:diverse_data}
\vspace{-1em}
\end{table}
Even on purely non-reasoning general corpora such as web-crawl, \ours extracts a reasoning signal that scales with data diversity (Appendix \ref{app:add_abl}). 
Table \ref{tab:diverse_data} illustrates that unlike prior work \citep{liu2025nover,zhou2025reinforcing}, \ours can be applied to any data format like academic papers, textbooks, web-crawl as well as SFT style data.
Overall, \ours is scalable, domain-agnostic pre-training augmentation that enhances both reasoning and accuracy.


\paragraph{Does the improvement sustain under compute equivalent baselines?} 
A critical question is whether \ours's gains stem from its unique RL-based pretraining or simply higher compute. 
Standard next-token pretraining quantifies compute by input tokens, but \ours{} adds rollout costs not captured by this metric.
For fair comparison, we evaluate against \mcpt baselines under: (a) equal Input Tokens Seen and (b) equal total Compute FLOPs. 
\ours{} is fixed to $T_{inp}=170$M tokens; the token-matched \mcpt [170M] continues pretraining on 170M tokens (Input Token), while the FLOP-matched budget corresponds to 6B tokens for CPT (\mcpt [6B])(see Appendix \ref{sec:appendix-ablation}). 

In \autoref{tab:diverse_data}, \mours outperforms \mcpt trained on the same 170M tokens and maintains a clear advantage even against a compute-matched \mcpt exposed to 6B tokens (35× more data).
Despite this disparity, \ours achieves a 5.3\% gain on average (compare \mcpt \nc [6B] vs \mours \nc [170M]), with consistent improvements across math and science benchmarks. 
These results show that \ours's gains stem not from more efficient use of compute, not larger budgets, validating the effectiveness of our approach.

\paragraph{Is \ours comparable to \cpt with high-quality reasoning data?} 
High-quality reasoning corpora have shown to substantially boost base model reasoning ability when used in continuous pretraining (\cpt) or mid-training \citep{wang2025octothinker, gandhi2025cognitivebehaviorsenableselfimproving}. This raises the important question of whether \cpt can match or even surpass \ours under such favorable conditions.
To investigate this, we conduct \cpt on both reasoning-centric, \nc and general pretraining ($\mathcal{D}_{\mathrm{PT}}$) datasets, each using 170M tokens. Our results in \autoref{tab:diverse_data} show that even with high quality reasoning data, \ours consistently outperforms \cpt by a significant margin. 
Specifically, \mours outperforms \mcpt, showing an average gain of 8\% on \nc and 5\% on pre-training data mix ($\mathcal{D}_{\mathrm{PT}}$) on 1B tokens.
These results highlight two key insights. 
First, while \cpt benefits from reasoning-dense corpora, it remains sensitive to domain skew—evident in the weak science accuracy on $\mathcal{D}_{\mathrm{PT}}$—whereas \ours generalizes more evenly across disciplines. Second, the consistent margin by which \ours outperforms \cpt, even in the presence of high quality reasoning data, underscores that the gains of \ours are not merely due to data quality but stem from the algorithmic design itself. This reinforces the conclusion that \ours provides a generalizable mechanism for leveraging reasoning data during pretraining, complementing rather than being overshadowed by high-quality corpus selection.



\begin{figure}[t]
  \centering
  \begin{subfigure}{.32\linewidth}
    \includegraphics[width=\linewidth]{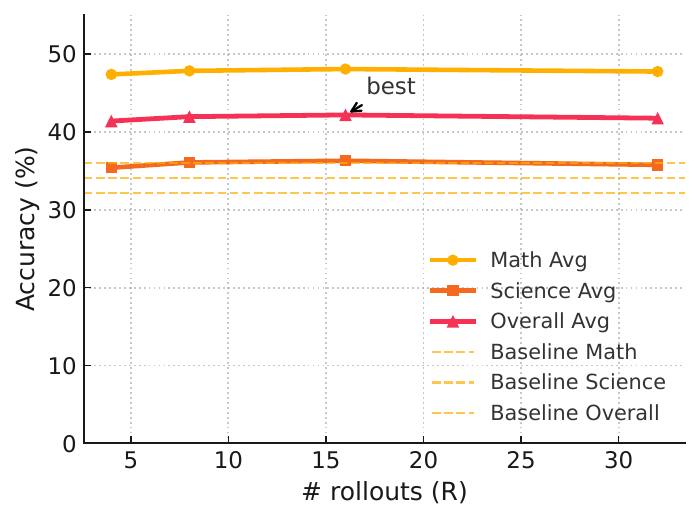}
    \caption{Number of rollouts}
  \end{subfigure}\hfill
  \begin{subfigure}{.32\linewidth}
    \includegraphics[width=\linewidth]{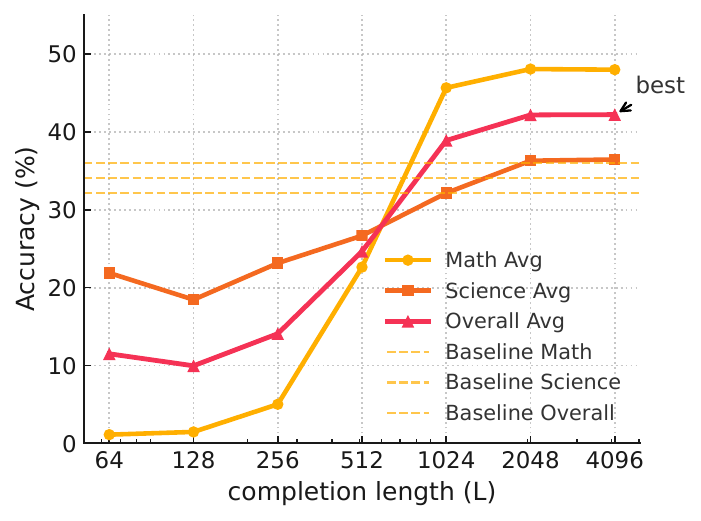}
    \caption{completion length}
  \end{subfigure}\hfill
  \begin{subfigure}{.32\linewidth}
    \includegraphics[width=\linewidth]{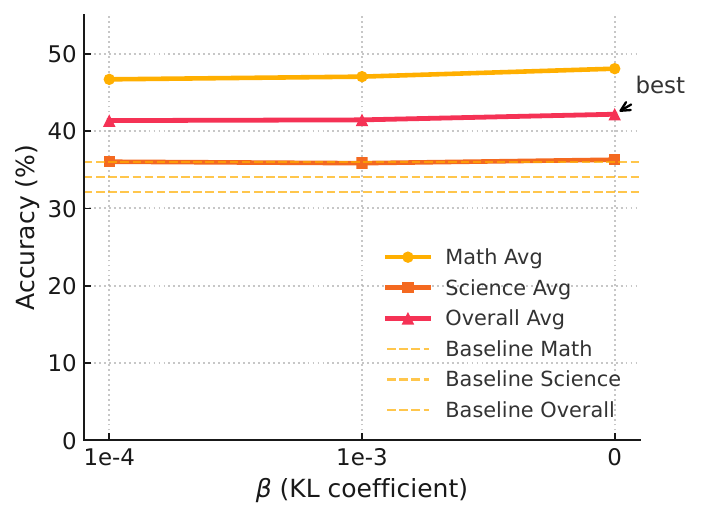}
    \caption{KL coefficient $\beta$}
  \end{subfigure}
  \caption{\textbf{Ablations on Qwen3‑1.7B.} Curves report Math/Science/Overall averages. Dashed lines mark the base model.}
  \label{fig:ablations}
  \vspace{-1em}
\end{figure}

\paragraph{Ablations on rollout count, completion length, and KL weight.}
\Cref{fig:ablations} visualizes the trends across three settings: (a) rollouts, (b) completion length, and (c) KL. Please look into~\Sref{sec:appendix-ablation} for more detailed numbers and per-task breakdowns. More rollouts help up to $G=16$ (\emph{Overall} $42.17\%$); $G=4$ and $8$ already reach $41.38\%$ and $41.95\%$, while $G=32$ decreases slightly to $41.75\%$ (Fig.~\ref{fig:ablations}a). Increasing completion length gives the largest gains. Specifically, \emph{overall} rises from $11.50\%$ at $64$ to $42.17\%$ at $2048$, with \emph{Math}/\emph{Science} moving from $1.12\%/21.88\%$ to $48.06\%/36.29\%$ (Fig.~\ref{fig:ablations}b). Extending to $4096$ yields $42.21\%$ at roughly twice the thought budget, so we default to $2048$. Furthermore, a KL anchor does not help. Specifically, $\beta=10^{-4}$ and $10^{-3}$ give $41.35\%$ and $41.44\%$, compared to $42.17\%$ at $\beta=0$, and it also increases memory and step time (Fig.~\ref{fig:ablations}c). We therefore use $G=16$, completion length $2048$, and $\beta=0$ in later experiments.

\section{Related Work}
\label{sec:related_work}
\paragraph{Next-Token Prediction.} Next-token prediction is the standard pretraining objective for LLMs: predict the next word from prior context \citep{shannon1951prediction,bengio2003neural}. Scaling it with Transformers \citep{vaswani2017attention} enabled landmark and state-of-the-art systems~\citep{radford2018improving,brown2020language,smith2022using,bi2024deepseek,nano2025efficient,yang2025qwen3}. Anticipating tokens across corpora induces syntactic, semantic, and pragmatic structure that transfers broadly. Alternatives include masked language modeling \citep{devlin2019bert} and span corruption \citep{raffel2020exploring}, but next-token prediction remains dominant for its alignment with left-to-right generation and strong downstream accuracy across tasks. In this work. we add a verifier-free dense reward during pretraining that leverages reasoning before prediction.

\paragraph{Verifier-Free Rewards in Post-Training.} Recent work explores verifier-free rewards. \cite{yuan2024self} uses iterative DPO where, after SFT, the model judges its own candidates to create preference pairs. \cite{liu2025nover} trains with incentive RL on SFT corpora. \cite{zhao2025learning} proposes RL from an internal feedback while using the model's confidence as reward. \ours{}, in contrast, is a GRPO-style pretraining objective. It operates on any text data including web-crawl, academic papers and SFT datasets and optimizes continuation quality beyond next-token prediction. Because these methods target post-training policies, direct comparisons are not well-posed.

\section{Conclusion}
\label{sec:conclusion}

We introduce \ours{}, a reinforcement pretraining objective that rewards chain-of-thought by its information gain for next-token prediction. Unlike traditional approaches that defer RL to post-training, \ours{} instills reasoning during pretraining, yielding gains that persist and compound after alignment. Experiments across datasets, domains, and architectures show that \ours{} consistently outperforms compute-matched baselines and scales efficiently to large hybrid models, establishing reinforcement pretraining as a principled and general alternative to likelihood-only training.

\bibliography{iclr2026_conference}
\bibliographystyle{iclr2026_conference}

\newpage
\section{Appendix}
\label{sec:appendix}
\renewcommand\thefigure{S.\arabic{figure}}
\setcounter{figure}{0}
\renewcommand\thetable{S.\arabic{table}}
\setcounter{table}{0}

\subsection{Proofs}
\label{sec:appendix_proof}
In this section, we provide the proofs supporting the methodology in §\ref{sec:method}. 
We first prove the tokenwise cross-entropy (CE) reduction identity (Prop.~\ref{prop:ce-reduction}), 
then the lower bound via marginalization over thoughts (Prop.~\ref{prop:lower-bound}). 
Finally, we state and prove Prop.~\ref{prop:token-to-seq}, which formalizes the positionwise-credit claim described in §\ref{subsec:properties}: under teacher forcing, averaging the expected tokenwise information-gain rewards across positions recovers the expected per-token sequence-level CE improvement.

For convenience, we recall the key definitions from the main text: the reasoned and baseline log-evidence 
$S_{\text{pred}}(c_t) = \log p_\theta(x_t \mid x_{<t}, c_t)$ and 
$S_{\textsc{ema}} = \log \bar p_\phi(x_t \mid x_{<t})$ 
(\eqref{eq:spred_sema}); the information-gain reward 
$r(c_t) = S_{\text{pred}}(c_t)-S_{\textsc{ema}}$ (\eqref{eq:ig}); 
and the cross-entropy 
$\CE(p,q) \stackrel{\mathrm{def}}{=} \E_{x\sim p}\!\big[-\log q(x)\big]$ (\eqref{eq:ce-def}).

\subsection{Proof of Proposition~\ref{prop:ce-reduction} (Expected improvement identity)}
\begin{proof}[Proof of Proposition~\ref{prop:ce-reduction}]
Fix the context $x_{<t}$ and a realized thought $c_t$, and let
$p_t^\ast(x) := p^\ast(x \mid x_{<t})$ denote the data distribution over $x_t$ at this position.
By the reward definition \eqref{eq:ig} together with \eqref{eq:spred_sema},
\[
r(c_t)
= \log p_\theta\!\big(x_t \mid x_{<t}, c_t\big)
  - \log \bar{p}_\phi\!\big(x_t \mid x_{<t}\big).
\]
Taking expectation with respect to $x_t \sim p_t^\ast$ and using linearity of expectation,
\begin{align*}
\underset{x_t \sim p_t^\ast}{\E}\big[r(c_t)\big]
&= \underset{x_t \sim p_t^\ast}{\E}\!\Big[\log p_\theta(x_t \mid x_{<t}, c_t)\Big]
 \;-\; \underset{x_t \sim p_t^\ast}{\E}\!\Big[\log \bar{p}_\phi(x_t \mid x_{<t})\Big].
\end{align*}
By the definition of cross-entropy \eqref{eq:ce-def}, $\CE(p,q)=\E_{x\sim p}[-\log q(x)]$, so each expectation of a log-likelihood equals the negative cross-entropy:
\[
\underset{x_t \sim p_t^\ast}{\E}\!\Big[\log p_\theta(x_t \mid x_{<t}, c_t)\Big]
= -\,\CE\!\big(p^\ast,\, p_\theta(\cdot \mid x_{<t}, c_t)\big),\qquad
\underset{x_t \sim p_t^\ast}{\E}\!\Big[\log \bar{p}_\phi(x_t \mid x_{<t})\Big]
= -\,\CE\!\big(p^\ast,\, \bar{p}_\phi(\cdot \mid x_{<t})\big).
\]
Substituting into the previous display yields
\[
\underset{x_t \sim p_t^\ast}{\E}\big[r(c_t)\big]
= \CE\!\big(p^\ast, \bar{p}_\phi(\cdot \mid x_{<t})\big)
 \;-\; \CE\!\big(p^\ast, p_\theta(\cdot \mid x_{<t}, c_t)\big),
\]
which is the desired identity. \qedhere
\end{proof}

\subsection{Proof of Proposition~\ref{prop:lower-bound} (Lower bound via marginalization over thoughts)}
\begin{proof}[Proof of Proposition~\ref{prop:lower-bound}]
Fix $(x_{<t},x_t)$ and recall
$S_{\text{pred}}(c_t)=\log p_\theta(x_t\mid x_{<t},c_t)$ and
$\tilde p_\theta(x\mid x_{<t})=\E_{z_t\sim\pi_\theta(\cdot\mid x_{<t})}\!\big[p_\theta(x\mid x_{<t},z_t)\big]$.

\paragraph{(i) Jensen bound.}
Conditioning on $(x_{<t},x_t)$ and taking expectation over $c_t\sim\pi_\theta(\cdot\mid x_{<t})$,
\[
\E_{c_t\sim\pi_\theta}\!\big[S_{\text{pred}}(c_t)\big]
=\E_{c_t}\!\Big[\log p_\theta(x_t\mid x_{<t},c_t)\Big]
\;\le\;
\log \E_{c_t}\!\Big[p_\theta(x_t\mid x_{<t},c_t)\Big]
=\log \tilde p_\theta(x_t\mid x_{<t}),
\]
where the inequality is Jensen’s inequality applied to the concave function $\log(\cdot)$. This proves (i) pointwise for the realized $x_t$.

\paragraph{(ii) Bound on $J(\theta)$.}
By definition of the reward in \eqref{eq:ig} and teacher forcing (see \eqref{eq:spred_sema}),
\[
J(\theta)
\,=\,\E\!\Big[\E_{c_t\sim\pi_\theta}\!\big[S_{\text{pred}}(c_t)\big]\;-\;S_{\textsc{ema}}\Big]
\,\le\,\E\!\Big[\log \tilde p_\theta(x_t\mid x_{<t})\;-\;\log \bar p_\phi(x_t\mid x_{<t})\Big]
\,=\,\E\!\left[\log \frac{\tilde p_\theta(x_t\mid x_{<t})}{\bar p_\phi(x_t\mid x_{<t})}\right],
\]
where the inequality uses part (i) and the outer expectation is over $(x_{<t},x_t)\sim\mathcal{D}$. This proves (ii).

\paragraph{Tightness.}
Equality in (i) (and hence in (ii)) holds precisely when $p_\theta(x_t\mid x_{<t},c_t)$ is almost surely constant in $c_t$ under $\pi_\theta(\cdot\mid x_{<t})$ (e.g., when the predictor ignores the thought or when the thought policy is degenerate).

\end{proof}

\subsection{Tokenwise–to–sequence connection under teacher forcing (positionwise credit)}
\label{app:token-to-seq}
This subsection formalizes the claim in §\ref{subsec:properties} that summing positionwise CE improvements recovers the sequence-level (per-token) improvement. The following proposition is \emph{new to the appendix} and not required elsewhere; it clarifies how tokenwise rewards aggregate at the sequence level under teacher forcing.

\begin{proposition}[Tokenwise--to--sequence connection under teacher forcing]
\label{prop:token-to-seq}
Let $\vx=(x_1,\ldots,x_T)$ be drawn from the data distribution $p^\ast(\vx)$ and fix a policy $\pi_\theta(c_t\!\mid\!x_{<t})$, the reasoned scorer $p_\theta(\cdot\!\mid\!x_{<t},c_t)$, and the no-think baseline $\bar p_\phi(\cdot\!\mid\!x_{<t})$. Define the sequence-level (per-token) cross-entropy for the baseline and the (stochastic) reasoned scorer by
\[
\CE_{\mathrm{seq}}\!\big(p^\ast,\bar p_\phi\big)
:= \E_{\vx\sim\mathcal{D}}\!\left[ -\frac{1}{T}\sum_{t=1}^T \log \bar p_\phi(x_t\mid x_{<t})\right],
\]
\[
\CE_{\mathrm{seq}}\!\big(p^\ast,p_\theta[\pi_\theta]\big)
:= \E_{\vx\sim\mathcal{D}}\!\left[ -\frac{1}{T}\sum_{t=1}^T \E_{c_t\sim\pi_\theta(\cdot\mid x_{<t})}\big[\log p_\theta(x_t\mid x_{<t},c_t)\big]\right].
\]
Then the average over positions of the \emph{expected} tokenwise information-gain rewards equals the per-token sequence-level CE improvement of the reasoned scorer against the baseline:
\[
\E_{\vx}\!\left[\frac{1}{T}\sum_{t=1}^T
\E_{c_t\sim\pi_\theta(\cdot\mid x_{<t})}\,
\E_{x_t\sim p^\ast(\cdot\mid x_{<t})}\!\big[r(c_t)\big]\right]
\;=\;
\CE_{\mathrm{seq}}\!\big(p^\ast,\bar p_\phi\big)\;-\;
\CE_{\mathrm{seq}}\!\big(p^\ast,p_\theta[\pi_\theta]\big)\;.
\]
\end{proposition}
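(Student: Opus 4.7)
The strategy is to reduce the claim to a pointwise application of Proposition~\ref{prop:ce-reduction} at each position $t$, followed by aggregation using linearity of expectation and the tower property. First, I would fix a position $t$ and a history $x_{<t}$, and treat the sampled thought $c_t \sim \pi_\theta(\cdot\mid x_{<t})$ as fixed. Proposition~\ref{prop:ce-reduction} then yields the identity
\[
\E_{x_t\sim p^\ast(\cdot\mid x_{<t})}[r(c_t)]
\;=\; \CE\!\big(p^\ast,\bar p_\phi(\cdot\mid x_{<t})\big)
\;-\; \CE\!\big(p^\ast,p_\theta(\cdot\mid x_{<t},c_t)\big).
\]
Since $c_t$ was fixed, this holds almost surely under $\pi_\theta(\cdot\mid x_{<t})$.

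Second, I would take the outer expectation over $c_t\sim\pi_\theta(\cdot\mid x_{<t})$. The baseline cross-entropy on the right does not depend on $c_t$ and can be pulled out unchanged; the reasoned cross-entropy becomes $\E_{c_t}\!\big[\CE(p^\ast,p_\theta(\cdot\mid x_{<t},c_t))\big]$. Expanding both CE terms using the definition in \eqref{eq:ce-def} rewrites this as a difference of two expected negative log-likelihoods under $x_t\sim p^\ast(\cdot\mid x_{<t})$, one for $\bar p_\phi$ and one for the $c_t$-averaged log-score $\E_{c_t}\log p_\theta(x_t\mid x_{<t},c_t)$.

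Third, I would average over $t=1,\ldots,T$ and take expectation over $\vx\sim p^\ast$. The key observation is that $\vx\sim p^\ast$ induces the correct conditional $p^\ast(\cdot\mid x_{<t})$ at every $t$ by the tower property, so $\E_{\vx}\!\big[f(x_{<t},x_t)\big]=\E_{\vx}\!\big[\E_{x_t\sim p^\ast(\cdot\mid x_{<t})}[f(x_{<t},x_t)]\big]$ for any integrable $f$. Applying this to the log-likelihood expressions produced in the previous step lets me replace each inner $x_t$-expectation by the corresponding term evaluated on the realized $x_t$ drawn from $\vx$. After this substitution, the two summands match exactly the definitions of $\CE_{\mathrm{seq}}(p^\ast,\bar p_\phi)$ and $\CE_{\mathrm{seq}}(p^\ast,p_\theta[\pi_\theta])$, and linearity of expectation over the $\tfrac{1}{T}\sum_t$ finishes the identity.

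\textbf{Main obstacle.} There is no deep technical difficulty here; the proof is essentially a careful bookkeeping of iterated expectations. The one place to be precise is in step three, where I invoke the tower property to interchange the outer $\vx$-expectation with the inner $x_t$-expectation under $p^\ast(\cdot\mid x_{<t})$. I would make this rigorous by noting that under $p^\ast$, the conditional law of $x_t$ given $x_{<t}$ is exactly $p^\ast(\cdot\mid x_{<t})$, and that $c_t$ is conditionally independent of $x_t$ given $x_{<t}$ (it is drawn from $\pi_\theta(\cdot\mid x_{<t})$ with no dependence on $x_t$), which justifies exchanging the $c_t$- and $x_t$-expectations freely. Integrability of the log-densities is assumed implicitly, as in Propositions~\ref{prop:ce-reduction} and~\ref{prop:lower-bound}.
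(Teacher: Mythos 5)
Your proposal is correct and follows essentially the same route as the paper's proof: apply Proposition~\ref{prop:ce-reduction} positionwise for fixed $(x_{<t},c_t)$, take the $c_t$-expectation (noting the baseline term is $c_t$-free and that $c_t\perp x_t\mid x_{<t}$ under teacher forcing), and then aggregate over $t$ and $\vx$ via the tower property so the inner $x_t$-expectations collapse onto the realized tokens and the two sums match the definitions of $\CE_{\mathrm{seq}}$. The only cosmetic difference is that the paper phrases the final aggregation step as ``the chain rule for likelihoods'' while you invoke the tower property explicitly, which is the same bookkeeping.
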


\begin{proof}
\textbf{(i) Conditional independence under teacher forcing.}
At position $t$, teacher forcing samples the target token from the data channel while the thought is sampled from the policy given the same prefix:
\[
x_t \sim p^\ast(\cdot\mid x_{<t}),
\qquad
c_t \sim \pi_\theta(\cdot\mid x_{<t}).
\]
Hence
\[
p(c_t,x_t\mid x_{<t})=\pi_\theta(c_t\mid x_{<t})\,p^\ast(x_t\mid x_{<t}),
\quad\text{i.e.}\quad
c_t \perp x_t \mid x_{<t}.
\]
This implies $\E_{x_t\sim p^\ast(\cdot\mid x_{<t},c_t)}[\cdot]
=\E_{x_t\sim p^\ast(\cdot\mid x_{<t})}[\cdot]$.

\smallskip
\textbf{(ii) Positionwise CE reduction.}
By Proposition~\ref{prop:ce-reduction}, for any fixed $(x_{<t},c_t)$,
\[
\E_{x_t\sim p^\ast(\cdot\mid x_{<t})}\!\big[r(c_t)\big]
=
\CE\!\big(p^\ast,\bar p_\phi(\cdot\mid x_{<t})\big)
-
\CE\!\big(p^\ast,p_\theta(\cdot\mid x_{<t},c_t)\big).
\]
Taking expectation over $c_t\sim\pi_\theta(\cdot\mid x_{<t})$ and using linearity of expectation gives
\[
\E_{c_t}\E_{x_t}\!\big[r(c_t)\big]
=
\CE\!\big(p^\ast,\bar p_\phi(\cdot\mid x_{<t})\big)
-
\E_{c_t}\,\CE\!\big(p^\ast,p_\theta(\cdot\mid x_{<t},c_t)\big).
\]

\smallskip
\textbf{(iii) Sum over positions.}
Average the identity in (ii) over $t=1,\dots,T$ and over $\vx\sim\mathcal{D}$:
\begin{align*}
&\E_{\vx}\!\left[\frac{1}{T}\sum_{t=1}^T
\E_{c_t}\E_{x_t}\!\big[r(c_t)\big]\right]
\\[0.25em]
&\qquad=
\E_{\vx}\!\left[\frac{1}{T}\sum_{t=1}^T
\CE\!\big(p^\ast,\bar p_\phi(\cdot\mid x_{<t})\big)\right]
-
\E_{\vx}\!\left[\frac{1}{T}\sum_{t=1}^T
\E_{c_t}\,\CE\!\big(p^\ast,p_\theta(\cdot\mid x_{<t},c_t)\big)\right].
\end{align*}
By the definition of cross-entropy in \eqref{eq:ce-def} and the chain rule for likelihoods,
\[
\E_{x_t\sim p^\ast(\cdot\mid x_{<t})}\!\big[-\log \bar p_\phi(x_t\mid x_{<t})\big]
=\CE\!\big(p^\ast,\bar p_\phi(\cdot\mid x_{<t})\big),
\]
and similarly for the reasoned scorer inside the $c_t$-expectation. Therefore the two sums on the right are exactly
$\CE_{\mathrm{seq}}\!\big(p^\ast,\bar p_\phi\big)$ and
$\CE_{\mathrm{seq}}\!\big(p^\ast,p_\theta[\pi_\theta]\big)$
as defined above, yielding the claimed equality.

\end{proof}

\section{Why Relative Advantages Do Not Reward Bad Thoughts}
\label{app:improvement_bad_thought}

\subsection{Proof of Monotonic Improvement}
It may seem paradoxical that, when all thoughts perform poorly ($r(c_t)<0$), the group-relative formulation still labels one as ``better'' and reinforces it. Does this mean the model is being trained to favor bad reasoning? We demonstrate that, mathematically, this mechanism is sound: the update remains an unbiased gradient step on $J(\theta)$, ensuring monotonic improvement even in such cases.

\paragraph{1.Objective.}
For context $x_{<t}$ and target token $x_t$:
\begin{equation}
J(\theta) = \mathbb{E}_{c\sim\pi_\theta}\big[r(c)\big], \qquad
r(c) = \log p_\theta(x_t \mid x_{<t}, c) - \log \bar{p}_\phi(x_t \mid x_{<t}).
\end{equation}
Maximizing $J$ reduces cross-entropy versus the no-think baseline. Ignoring stop-gradients, the policy gradient is
\begin{equation}
\nabla_\theta J(\theta) = \mathbb{E}_{c\sim\pi_\theta}\big[r(c)\,\nabla_\theta \log \pi_\theta(c)\big].
\label{eq:policy-grad}
\end{equation}

\paragraph{2. Group-relative advantages are unbiased.}
We draw $G\!\ge\!2$ thoughts $c^{(1)},\dots,c^{(G)}\!\sim\!\pi_\theta$ and form
\begin{align}
\bar{r} &= \tfrac{1}{G}\sum_{j=1}^G r(c^{(j)}), \\
A^{(i)} &= \tfrac{G}{G-1}\!\left(r(c^{(i)}) - \bar{r}\right).
\end{align}
Let $\mu = \mathbb{E}[r(c)]$. Then
\begin{equation}
\mathbb{E}[A^{(i)} \mid c^{(i)}] = r(c^{(i)}) - \mu,
\end{equation}
and
\begin{equation}
\mathbb{E}\!\left[\frac{1}{G}\!\sum_{i=1}^G A^{(i)} \nabla_\theta \log \pi_\theta(c^{(i)})\right]
= \nabla_\theta J(\theta).
\end{equation}
Hence, the estimator is \textbf{unbiased}. Even if all rewards are negative, the update follows the correct gradient direction.

\paragraph{3. Why positive advantage for the ``least-bad'' rollout is correct.}
As the model learns, it gradually increases the probability of generating thoughts that help prediction and decreases the probability of those that do not. This process, known as the \emph{replicator dynamic}, captures how relative advantages drive steady improvement over time:
\begin{equation}
\dot{\pi}(c) = \pi(c)\,[r(c) - \mu],
\end{equation}
whose improvement rate is
\begin{equation}
\frac{d}{d\tau}J(\theta(\tau)) = \mathrm{Var}_{\pi}[r(c)] \ge 0.
\end{equation}
Even if all $r(c)<0$, shifting probability mass from more-negative to less-negative thoughts \emph{increases} $J$. Thus, a positive advantage for the least-bad thought reflects correct relative improvement, not misaligned reward.

\paragraph{4. Monotonic expected improvement.}
With unbiased gradient estimator $\hat{g}\approx\nabla J$ and small step size $\alpha$:
\begin{equation}
\mathbb{E}[J(\theta+\alpha \hat{g})] \approx J(\theta) + \alpha \|\nabla J(\theta)\|_2^2 \ge J(\theta),
\end{equation}
ensuring monotonic improvement in expectation.

\paragraph{5. The gradient does not blindly increase harmful thoughts.}
A remaining concern is that a thought with negative reward $r(c)<0$ might still receive a positive advantage $A(c)>0$ if it is simply less harmful than its peers, apparently encouraging bad reasoning. However, the gradient update does not blindly amplify such thoughts; it reallocates probability mass among them in a way that \emph{improves the expected objective}.

\medskip
\noindent
First, because the advantages are defined as
\[
A(c) = \tfrac{G}{G-1}\big(r(c) - \bar r\big), \qquad 
\text{with} \quad \bar r = \tfrac{1}{G}\sum_{j=1}^G r(c^{(j)}),
\]
the total $\sum_i A(c^{(i)}) = 0$. Hence, even if every reward is negative, the update is zero-sum: probability increases only for thoughts that are \emph{less negative} than average, while it decreases for those that are worse. This shift raises the expected reward $J(\theta)$ because the expected improvement rate is
\[
\frac{d}{d\tau}J(\theta(\tau)) = \mathrm{Var}_\pi[r(c)] \ge 0.
\]
Thus, the method performs a relative reallocation and guarantees monotonic ascent in expectation.

\medskip
\noindent
Second, a positive advantage $A(c)>0$ does not deterministically increase the corresponding $r(c)$ on the next update; it increases it \emph{in expectation}. The policy gradient on thought tokens,
\[
\nabla_\theta \mathcal{L}_{\text{IG}} \propto -A(c)\,\nabla_\theta \log \pi_\theta(c),
\]
acts on the relative usefulness of each thought, not its absolute reward value. Over repeated steps, the model raises the log-evidence $\log p_\theta(x_t\mid x_{<t},c)$ for those thoughts that contribute more to prediction, thereby increasing their expected $r(c)$ relative to the slowly moving EMA baseline $\bar p_\phi$.

\medskip
\noindent
Third, the EMA baseline prevents artificial reward inflation. Because $\bar p_\phi$ lags behind $\theta$ through a slow exponential moving average, any transient or spurious improvement in $r(c)$ dissipates as the baseline catches up. Sustained positive advantages arise only when the model genuinely improves predictive likelihood relative to the no-think counterfactual.

\medskip
\noindent
Finally, while a positive advantage can momentarily reinforce a thought whose raw reward remains negative, this update is not pathological. It simply redirects probability toward the least harmful reasoning pattern available, reducing overall loss. Over time, these relatively better thoughts typically evolve into genuinely helpful ones as their predictive evidence increases, ensuring that the training process remains stable and aligned with maximizing $J(\theta)$.

\subsection{Numerical Illustration of Relative Advantage Updates}
\label{app:numerical-example}

To make the abstract dynamics more concrete, we present a simple numerical example showing how the group-relative advantage mechanism improves the expected objective $J(\pi;r)$ even when all rewards are initially negative. \noindent
Note that in this illustrative example we denote the expected reward as $J(\pi; r)$
to emphasize its dependence on the discrete policy over thoughts $\pi$
and fixed rewards $r_i$.
Conceptually, this corresponds to the same information-gain objective $J(\theta)$
introduced in the main text, expressed here in a simplified form.

We consider four sampled thoughts $c_1,c_2,c_3,c_4$ with policy
$\pi = [\pi_1,\pi_2,\pi_3,\pi_4]$, initialized uniformly. For each thought,
the information-gain reward is
\[
r_i = \log p_\theta(x_t \mid x_{<t}, c_i) - \log \bar p_\phi(x_t \mid x_{<t}),
\]
and the group size is $G=4$ with mean reward $\bar r = \tfrac{1}{4}\sum_i r_i$.
The group-relative advantage is
\[
A_i = \tfrac{G}{G-1}\,(r_i - \bar r) = \tfrac{4}{3}(r_i - \bar r),
\]
and we assume each thought has length $|c_i|=4$ so that per-token weight is $A_i/4$.
The policy is updated by an exponentiated-gradient (replicator) step
\[
\pi_{\text{new}}(i) \propto \pi_{\text{old}}(i)\,\exp(\eta A_i),
\quad \text{with } \eta=0.5,
\]
and the expected objective is $J(\pi;r) = \sum_i \pi_i r_i$.

\noindent
Although a positive advantage can momentarily reinforce a thought whose raw reward $r_i$ is still negative, this update is not pathological. Because advantages are computed relative to the group mean, a positive $A_i$ simply indicates that $c_i$ is \emph{less harmful} than its peers. Increasing its probability reallocates mass away from worse alternatives, thereby improving the expected objective $J$. Over subsequent updates, the model typically adapts to make these less-harmful thoughts genuinely helpful, raising $r_i$ in expectation.

\paragraph{Iteration 1: all thoughts are harmful ($r_i<0$), but one is least bad.}
\[
\pi^{(0)} = [0.25,0.25,0.25,0.25], \qquad
r^{(1)} = [-0.80,-0.60,-0.50,-0.30].
\]
Mean and advantages:
\[
\bar r^{(1)} = -0.55, \qquad
A^{(1)} = [-0.3333, -0.0667, +0.0667, +0.3333].
\]
Per-token weights: $A^{(1)}/|c| = [-0.0833, -0.0167, +0.0167, +0.0833]$.
Note that $c_4$ has $r_4=-0.30<0$ yet receives a positive advantage
$A_4=+0.3333$, so every token in $c_4$ gets a positive gradient.
Policy update with $\eta=0.5$ gives
\[
\pi^{(1)} \propto \pi^{(0)}\!\odot\!\exp(0.5A^{(1)}) =
[0.2101, 0.2401, 0.2566, 0.2932],
\]
yielding $J(\pi^{(0)};r^{(1)})=-0.5500$ and
$J(\pi^{(1)};r^{(1)})=-0.5284$. This is a small but consistent improvement.

\paragraph{Iteration 2: dense updates improve $r$ on $c_3,c_4$.}
\[
r^{(2)} = [-0.80, -0.60, -0.35, -0.10], \qquad
\bar r^{(2)} = -0.4625, \qquad
A^{(2)} = [-0.4500, -0.1833, +0.1500, +0.4833].
\]
Update:
\[
\pi^{(2)} \propto \pi^{(1)}\!\odot\!\exp(0.5A^{(2)}) =
[0.1618, 0.2113, 0.2668, 0.3601].
\]
Expected objective: $J(\pi^{(1)};r^{(2)})=-0.4313$,
$J(\pi^{(2)};r^{(2)})=-0.3856$.

\paragraph{Iteration 3: the least-bad thought becomes genuinely helpful.}
\[
r^{(3)} = [-0.80, -0.60, -0.20, +0.05], \qquad
\bar r^{(3)} = -0.3875, \qquad
A^{(3)} = [-0.5500, -0.2833, +0.2500, +0.5833].
\]
Policy update:
\[
\pi^{(3)} \propto \pi^{(2)}\!\odot\!\exp(0.5A^{(3)}) =
[0.1127, 0.1681, 0.2772, 0.4420],
\]
and the expected objective improves again:
$J(\pi^{(2)};r^{(3)})=-0.2916$,
$J(\pi^{(3)};r^{(3)})=-0.2244$.

As seen in the above, in Iteration~1, all rewards are negative, yet $c_4$ (the least bad)has a positive advantage, showing how the dense loss pushes probability toward less harmful thoughts and increases $J$. Since rewards are tied to log-evidence, these positive gradients directly improve the corresponding $r(c)$ values, leading to less-negative and eventually positive rewards in later iterations. 

\section{Experimental Setup}
\label{sec:appendix-expt-setup}

\paragraph{\ours:} We employ \ours on both base and intermediate checkpoints using diverse datasets. To facilitate this, we use \cite{openr1} as the RL training backbone and deploy training using 32 H100 80GB SXM5 GPUs for 170M to 10B tokens. We train the base models with key settings including a constant learning rate of $1e^{-6}$, a batch size of 512 and a maximum context length of 2048 tokens. Each generation step contains 512 unique prompts sampled from the dataset, and performing 16 rollouts with temperature 0.7. We set KL coefficient to 0 across all runs.

\paragraph{Continuous Pre-training:} We continuously pretrain the \mbase model using both general pretraining and specialized post-training corpus to draw comparison between pretraining and \ours training objective. For this experimentation, we use Megatron-LM \citep{megatron-lm} as the pretraining backbone and continuously train on 32 H100 80GB SXM5 GPUs for 170M to 10B tokens depending on the data size and comparison requirement. During training, we use the AdamW optimizer \citep{loshchilov2018decoupled} with $\beta_1 =0.9$, $\beta_2=0.95$ and weight decay of 0.1. We use a 2-way tensor and pipeline parallelism to train the model. We set the maximum value of learning rate to $1e^{-6}$, minimum 
to $1e^{-7}$, and use a batch size of 6M tokens with a 8192 context length.

\paragraph{Post-Training:} For supervised fine-tuning (SFT), we use the OpenThoughts3 dataset \citep{guha2025openthoughtsdatarecipesreasoning}. We filtered examples that did not include a final answer. With this filtering scheme, the total number of samples for SFT post-training is $45,6024$. 
For RLVR, we used the The Mathematics Aptitude Test of Heuristics (MATH) dataset \citep{hendrycks2021measuring}  with $7,500$ examples. This dataset includes problems from various subjects such as algebra, geometry, number theory and precalculus. We trained models in all RLVR experiments for 1 epoch with a global batch size of 1024 and used cosine annealing and an initial learning rate of $1e^{-6}$.  

\paragraph{Prompt}

Given a context $x_{<t}$, we ask the model to reason about the target token $x_t$ using the following prompt, $p$.
\begin{takeaway}{teal}{System Prompt, $p$}
"You are a continuation-and-reasoning assistant. You receive the prefix of a context, problem, solution, or derivation. First, briefly think between $</$think$>$ and $</$think$>$ about what should come next. Then, after $</$think$>$, continue the text in the SAME style as the prefix (notation, LaTeX, tone), focusing on the next few steps rather than jumping to a final boxed answer. Do not restate the question or add meta commentary; simply continue the content."
\end{takeaway}



\section{Extended ablation details}
\label{sec:appendix-ablation}

\Cref{tab:ablations} reports per-task accuracies for each setting, and \Cref{fig:ablations} provides the corresponding curves for (a) rollout count, (b) completion length, and (c) KL coefficient. Unless stated, each sweep holds the other two dimensions at the best configuration (16 rollouts, completion length 2048, $\beta=0$).

\textbf{Rollout count.}
Increasing $G$ improves accuracy up to $G=16$, where \emph{Overall} reaches $42.17\%$ (from $34.03\%$, $+8.14$ points). The largest taskwise lifts at $G=16$ relative to the base are \textsc{gsm8k} ($+22.96$), \textsc{math-500} ($+13.85$), \textsc{miva} ($+7.20$), \textsc{MMLU} ($+6.35$), and \textsc{MMLU-Pro} ($+6.20$), while \textsc{GPQA} is unchanged ($27.51$ vs $27.52$). Moving from $G=16$ to $G=32$ slightly lowers \emph{Overall} to $41.75$ ($-0.42$), driven mainly by \textsc{GPQA} ($-2.13$), with other tasks nearly flat (e.g., \textsc{MMLU-Pro} $+0.79$, \textsc{MMLU} $-0.24$). This suggests diminishing returns once the group-relative estimator is already well-sampled.

\textbf{Completion length.}
Capacity on the thought channel dominates performance. Very short completions underperform sharply: at length 64, \emph{Overall} is $11.50$ and \emph{Math} averages $1.12$. Increasing to 512 raises \emph{Overall} to $24.65$ and \emph{Math} to $22.63$. The main jump occurs between 512 and 1024 (\emph{Overall} $+14.24$ to $38.89$; \textsc{gsm8k} $+28.55$; \textsc{math-500} $+36.85$). Extending to 2048 adds a smaller but consistent gain (\emph{Overall} $42.17$, $+3.28$ over 1024; \emph{Math}/\emph{Science} $48.06/36.29$). Pushing to 4096 gives only a marginal change (\emph{Overall} $42.21$, $+0.04$; small taskwise shifts such as \textsc{MMLU-Pro} $+0.64$ and \textsc{gsm8k} $-0.62$), so 2048 is the preferred trade-off.

\begin{table}[h]
\centering
\scriptsize
\setlength{\tabcolsep}{3pt}
\renewcommand{\arraystretch}{1.06}
\caption{Ablations on rollout count, completion length, and KL weight $\beta$ with \qwen. All numbers denote accuracy (\%).}
\label{tab:ablations}
\begin{tabular}{l *{10}{S[table-format=2.1]}}
\toprule
\multirow{2}{*}{Model / Variant}
& \multicolumn{7}{c}{Tasks (\%)} & \multicolumn{3}{c}{Macro avg (\%)}\\
\cmidrule(lr){2-8}\cmidrule(l){9-11}
& {MATH500} & {GSM8K} & {AMC23} & {Minerva} & {MMLU} & {MMLU-Pro} & {GPQA} & {Math} & {Science} & {Overall}\\
\midrule
\multicolumn{11}{l}{\textit{Baseline}}\\
Qwen3-1.7B-Base & 48.45 & 54.16 & 25.94 & 15.30 & 44.85 & 23.95 & 27.52 & 35.96 & 32.11 & 34.03\\
\addlinespace[2pt]
\multicolumn{11}{l}{\textit{Ablation: \# rollouts}}\\
num\_rollouts=4   & 59.45 & 74.79 & 33.44 & 21.78 & 50.83 & 28.81 & 26.52 & 47.37 & 35.39 & 41.38\\
num\_rollouts=8   & 61.70 & 76.93 & 30.62 & 22.06 & 50.88 & 30.55 & 26.77 & 47.83 & 36.07 & 41.95\\
num\_rollouts=16\textsuperscript{\textdagger} & 62.30 & 77.12 & 30.31 & 22.50 & 51.20 & 30.15 & 27.51 & 48.06 & 36.29 & {\bfseries 42.17}\\
num\_rollouts=32  & 60.45 & 77.26 & 30.94 & 22.29 & 50.96 & 30.94 & 25.38 & 47.74 & 35.76 & 41.75\\
\addlinespace[2pt]
\multicolumn{11}{l}{\textit{Ablation: completion length}}\\
completion\_length=64    &  1.00 &  2.84 &  0.62 &  0.00 & 33.26 & 15.46 & 16.92 &  1.12 & 21.88 & 11.50\\
completion\_length=128   &  1.73 &  3.17 &  0.94 &  0.05 & 29.04 & 13.94 & 12.37 &  1.47 & 18.45 &  9.96\\
completion\_length=256   &  2.95 & 13.86 &  2.81 &  0.46 & 37.19 & 17.09 & 15.15 &  5.02 & 23.14 & 14.08\\
completion\_length=512   & 21.35 & 46.58 & 16.25 &  6.34 & 42.27 & 19.82 & 17.93 & 22.63 & 26.67 & 24.65\\
completion\_length=1024  & 58.20 & 75.13 & 28.80 & 20.47 & 48.36 & 27.74 & 20.31 & 45.65 & 32.14 & 38.89\\
completion\_length=2048\textsuperscript{\textdagger} & 62.30 & 77.12 & 30.31 & 22.50 & 51.20 & 30.15 & 27.51 & 48.06 & 36.29 & 42.17\\
completion\_length=4096  & 62.00 & 76.50 & 30.60 & 22.80 & 51.30 & 30.79 & 27.27 & 47.98 & 36.45 & {\bfseries 42.21}\\
\addlinespace[2pt]
\multicolumn{11}{l}{\textit{Ablation: KL weight $\beta$}}\\
$\beta=10^{-4}$ & 61.35 & 75.86 & 28.00 & 21.50 & 51.00 & 31.58 & 25.50 & 46.68 & 36.03 & 41.35\\
$\beta=10^{-3}$ & 60.90 & 74.30 & 32.19 & 20.73 & 50.73 & 30.80 & 26.00 & 47.03 & 35.84 & 41.44\\
$\beta=0$\textsuperscript{\textdagger} & 62.30 & 77.12 & 30.31 & 22.50 & 51.20 & 30.15 & 27.51 & 48.06 & 36.29 & {\bfseries 42.17}\\
\bottomrule
\end{tabular}
\end{table}

\textbf{KL coefficient.}
Adding a token-level KL toward a fixed reference does not help overall. At $\beta=10^{-4}$ and $10^{-3}$, \emph{Overall} is $41.35$ and $41.44$ ($-0.82$ and $-0.73$ vs $\beta=0$). There are isolated improvements (\textsc{MMLU-Pro} $+1.43$ at $10^{-4}$; \textsc{AMC23} $+1.88$ at $10^{-3}$), but these are offset by broader declines (e.g., \textsc{gsm8k} $-1.26$ and $-2.82$; \textsc{GPQA} $-2.01$ and $-1.51$). The KL term also increases memory use and step time. We therefore keep $\beta=0$ in the main recipe.

\noindent In summary, the appendix table provides the taskwise breakdown behind these trends, and the figure shows the smooth saturation with rollouts, the strong length-driven regime change between 512 and 1024 tokens, and the lack of net benefit from KL.

\section{Additional Ablations}
\label{app:add_abl}

\begin{table*}[h!]
\begin{center}
\resizebox{0.7\textwidth}{!}{%
\begin{tabular}{@{}lcccc|c@{}}
\toprule
\textbf{Model} & \textbf{Dataset} & \textbf{Math Avg@1[8]} & \textbf{Science Avg} & \textbf{Science Avg@1[4]} 	& \textbf{Average}\\\toprule
\mbase &   - &  35.96	& 34.50	&32.11	&34.19\\ \midrule
\multirow{3}{*}{\mours} & Only Math & 48.23	&41.64	&36.77	&42.21\\
 & Only Science & 49.17	&39.65	&38.26	&42.36 \\
 & Combined & 49.76	&42.54	&37.78	&43.36 \\
\toprule
\end{tabular}
}%
\end{center}
\caption{\textbf{Ablation on math, science, and combined domains.} \ours shows particularly strong generalization in presence of multi-domain data.}
\label{tab:rlpt_on_diverse}
\end{table*}

\paragraph{Does the improvement sustain if we make Pretraining compute equivalent to \ours?}
For both comparisons, the configuration for \ours remains fixed, based on a budget of $T_{inp}=170M$ input tokens. First, we establish a baseline by continuing the pretraining of the base model on an identical 170M tokens (Base + CPT, Input Token). Second, to create a FLOP-equivalent baseline, we first approximate the total computational cost of \ours. The effective token budget, $T_{flop}$, can be estimated by summing the tokens used for gradient updates ($T_{inp}$) and the tokens processed during the rollout phase:
\[T_{flop} = (n \times l_{seq} \times bs \times iters) + T_{inp}\]
where $n$ is the number of rollouts per instance, $l_{seq}$ is the sequence length, $bs$ is the batch size and $iters$ is the number of steps \ours has gone through. This calculation results in an effective budget of approximately 6B tokens for our model. We therefore\begin{wraptable}[18]{r}{0.4\textwidth}
\centering
\vspace{0.5mm}
\resizebox{0.35\textwidth}{!}{%
\begin{tabular}{@{}lcc@{}}
\toprule
\textbf{Benchmarks} & $\mathcal{M}_{\mathrm{base}}$ & $\mathcal{M}_{\mathrm{RLP}}$ \\
\toprule
MATH500      & 78.81 & 81.15 \\
GSM8K         & 90.36 & 94.04 \\
AMC23         & 55.94 & 57.81 \\
Minerva & 37.96 & 40.26 \\
MMLU                    & 76.56 & 80.59 \\
MMLU@1[4]          & 74.03 & 79.31 \\
MMLU-Pro                & 59.20 & 65.53 \\
MMLU-Pro@1[4]      & 54.00 & 61.75 \\
GPQA                    & 44.44 & 48.15 \\
GPQA@1[4]          & 40.40 & 44.70 \\
\midrule
Math Avg                & 65.77 & 68.32 \\
Science Avg             & 60.07 & 64.76 \\
Science Avg@1[4]   & 56.14 & 61.92 \\
\midrule
\textbf{Overall}             & 60.66 & \textbf{65.00} \\
\bottomrule
\end{tabular}
} 
\caption{RLP training with \newqwen model.}
\label{tab:qwen14b_rlp}
\end{wraptable} train a second, more powerful CPT baseline on 6B tokens (Base + CPT, Flop Usage), holding all other hyperparameters constant.

\paragraph{\ours resonates well in presence of multidomain data.} 

Recent works  have shown tremendous improvement in reasoning tasks, particularly in mathematics, through RLVR \citep{liu2025prorlprolongedreinforcementlearning, deepscaler2025, hu2025openreasonerzeroopensourceapproach}. However, these methods are often tied to the complexity of queries, limiting their scalability. To draw a parallel, we evaluate \ours on \nc using different blends of math and science data. As shown in \autoref{tab:rlpt_on_diverse}, training only on math yields substantial math improvements, but comes at the cost of weaker generalization to science. Conversely, training only on science improves science accuracy, but underperforms in math compared to math-only training. Strikingly, combining both domains provides the best overall average, indicating that \ours is able to leverage complementary signals from multiple domains without diluting the benefits within each. This suggests that \ours not only scales beyond single-domain specialization but also thrives in multidomain settings where diverse reasoning styles reinforce one another.

%


\paragraph{Effectiveness of RLP with Scaling LLM.} Validating RLP on larger, state-of-the-art model sizes is indeed essential to confirm that our gains hold as parameter counts increase. To address this, we conducted a new set of experiments applying RLP to the $\mathcal{M}=$ \newqwen model and training on our general pretraining corpus ($\mathcal{D}_{\mathrm{PT}}$) for 1B tokens. As shown in \autoref{tab:qwen14b_rlp}, RLP delivers substantial improvements even on this stronger, significantly larger baseline. Applying RLP improves the overall average from 60.66\% to 65.00\%, with particularly notable gains in scientific reasoning where the average score improves from 60.07\% to 64.76\%. These results confirm that the dense, verifier-free signal provided by RLP remains effective at scale, successfully extracting reasoning capabilities that are not fully utilized by standard pretraining alone.\begin{wraptable}[20]{r}{0.4\textwidth}
\vspace{5mm}
\centering
\resizebox{0.35\textwidth}{!}{%
\begin{tabular}{@{}lcc@{}}
\toprule
\textbf{Benchmarks} & $\mathcal{M}_{\mathrm{base}}$ & $\mathcal{M}_{\mathrm{RLP}}$ \\
\toprule
MATH500      & 30.15 & 62.38 \\
GSM8K         & 29.56 & 81.42 \\
AMC23         & 22.81 & 37.81 \\
Minerva & 5.19  & 18.93 \\
MMLU                    & 11.59 & 13.10 \\
MMLU@1[4]          & 8.73  & 20.68 \\
MMLU-Pro                & 4.93  & 6.11  \\
MMLU-Pro@1[4]      & 2.66  & 7.50  \\
GPQA                    & 9.10  & 11.20 \\
GPQA@1[4]          & 5.68  & 7.70  \\
\midrule
Math Avg                & 21.93 & 50.14 \\
Science Avg             & 8.54  & 10.14 \\
Science Avg@1[4]   & 5.69  & 11.96 \\
\midrule
\textbf{Overall}             & 12.05 & 24.08 \\
\bottomrule
\end{tabular}
} 
\caption{Comparison of \nemo 4T Base and Base+RLP across benchmarks.}
\label{tab:12b4t_rlp}
\end{wraptable}

\paragraph{How early RLP can be applied?} Previously, we have confirmed that RLP can be integrated to intermediate checkpoints from last stage of pretraining. However, it is unclear whether the gains sustain if we pick a very early checkpoint for RLP training. Inspired by the finding of \cite{hanposition} and to study how early RLP can be introduced, we additionally evaluate a much earlier checkpoint. Concretely, we take a \nemo model trained on only 4T tokens (about 20\% of the full 20T pretraining budget) and apply RLP for 1B tokens on the same pretraining corpus $\mathcal{D}_{\mathrm{PT}}$. As shown in \autoref{tab:12b4t_rlp}, even at this early stage, RLP is highly effective: with only 1B tokens, Math Avg more than doubles (from 21.93 to 50.14), Science Avg@1[4] improves by 6 points (from 5.69 to 11.96), and Overall Average increases by 12 points (from 12.05 to 24.08). While our strongest final results come from applying RLP later in pretraining, these findings indicate that RLP can already yield large gains when the model has seen only a small fraction of the standard pretraining budget.


\paragraph{FLOP matched comparison between RLP and RPT.} We would like to clarify that even though RLP can be theoretically applied to tokens at every position in the document, in practice we only apply it for one token per document. This token is selected randomly and not through any criteria as in the case of RPT. For the experiments in \autoref{tab:nano_comp}, we have matched the number of input tokens for both RLP and RPT settings (we train both methods for one epoch of the same documents). But we want to highlight that the number of target tokens for which reward is calculated is much larger for RPT compared to RLP (since we don't apply the RLP reward to every token in the document). Hence, the setting in \autoref{tab:nano_comp} is in favor of RPT. Additionally, we don't include the compute needed to pre-select tokens using an external LLM for RPT. 

To directly address the head to head flop matched comparison, we run a controlled experiment using Nemotron-CrossThink data. We deploy both RLP and RPT for only one epoch on the same data, i.e., the number of target tokens for which reward is calculated is similar in both cases. RLP achieves a 16.23\% relative improvement in Overall Avg and consistently outperforms RPT on both math and science aggregates. These results confirm that the gains in Table 3 are not an artifact of mismatched settings. Even under stricter, target-matched conditions, RLP provides stronger and more general improvements. 

As shown in \autoref{tab:nano_comp}, RLP achieves a 16.23\% relative improvement in Overall Avg and consistently outperforms RPT on both math and science aggregates. These results confirm that the gains in Table 3 are not an artifact of mismatched settings. Even under stricter, target-matched conditions, RLP provides stronger and more general improvements.

\paragraph{Continuous pretraining with longer context length.} Qwen3-1.7B-Base is indeed eventually extended to a 32K context window, but as described in the Qwen3 technical report, this happens only in a third long-context stage after the model has already been pretrained for 30T+ tokens at a much shorter context (4,096 tokens) and then further trained on knowledge-intensive data. Our CPT experiments are conceptually closer to these first two stages; we continue pretraining on our pretraining mixture ($\mathcal{D}_{\mathrm{PT}}$), which consists almost entirely of relatively short documents without long-range dependencies. In this regime, substantially increasing the context length does not obviously provide additional learning signal, but does change the optimization landscape and the effective batch and gradient statistics.

To evaluate the effect of longer context length, we conduct an additional controlled experiment where we keep all CPT hyperparameters fixed and only increased the context length from 8K to 32K. The resulting model, denoted ($\mathcal{M}_{\mathrm{CPT}}$(32K)), is compared to our original ($\mathcal{M}_{\mathrm{CPT}}$(8K)) in \autoref{tab:cpt_8k_32k}.
\begin{wraptable}[18]{r}{0.4\textwidth}
\centering
\resizebox{0.4\textwidth}{!}{%
\begin{tabular}{@{}lcc@{}}
\toprule
\textbf{Benchmarks} & $\mathcal{M}_{\mathrm{CPT}}$(8K) & $\mathcal{M}_{\mathrm{CPT}}$(32K) \\
\midrule
AIME25           & 3.96  & 3.33  \\
MATH500          & 57.52 & 51.80 \\
GSM8K            & 72.85 & 60.44 \\
AMC23            & 31.25 & 25.00 \\
Minerva          & 19.03 & 17.46 \\
MMLU             & 41.95 & 42.19 \\
MMLU@1[4]        & 40.00 & 40.55 \\
MMLU-Pro         & 27.81 & 27.08 \\
MMLU-Pro@1[4]    & 24.61 & 22.87 \\
GPQA             & 26.26 & 25.76 \\
GPQA@1[4]        & 24.75 & 24.21 \\
\midrule
Math Avg         & 36.92 & 31.61 \\
Science Avg      & 32.01 & 31.68 \\
Science Avg@1[4] & 29.79 & 29.21 \\
\midrule
\textbf{Overall} & \textbf{32.90} & \textbf{30.83} \\
\bottomrule
\end{tabular}
} %
\caption{Comparison of CPT models with 8K vs 32K context length.}
\label{tab:cpt_8k_32k}
\end{wraptable}
The result suggests that for our pretraining corpus ($\mathcal{D}_{\mathrm{PT}}$), which rarely contains long documents that would actually utilize a 32K window, the 8K context configuration is at least as strong as, and in practice strictly better than, a 32K context configuration under matched compute and hyperparameters. Therefore, while we agree that context length is an important design choice, in our specific setup, using 8K rather than 32K does not weaken the CPT baseline; if anything, the longer context hurts optimization without yielding downstream benefits. Importantly, all comparisons between RLP and CPT are made against the stronger 8K CPT configuration.

\paragraph{Effect of EMA over RLP training.} In RLP, the EMA baseline ($\bar p_\phi$) acts as a dynamic no-think counterfactual, providing a reference log-likelihood for each next token. The decay rate $\tau$ controls comparison difficulty: if $\tau$ is too low, the baseline updates too quickly and the reward collapses toward zero; if too high, it becomes stale and yields artificially easy gains.

To justify our choice of $\tau$, we ran a sensitivity study on Qwen3-1.7B-Base with $\tau \in {0.99, 0.995, 0.999, 0.9995}$. As shown in \autoref{tab:tau_ablation}, performance forms a bell-shaped curve with a clear peak at $\tau=0.999$.

\begin{table*}[h!]
\begin{center}
\resizebox{0.6\textwidth}{!}{%
\begin{tabular}{@{}lcccc@{}}
\toprule
\textbf{Model} & $\boldsymbol{\tau}$ & \textbf{Math Avg} & \textbf{Science Avg} & \textbf{Overall Avg} \\\toprule
$\mathcal{M}_{\mathrm{base}}$ & N/A    & 35.96 & 32.11 & 34.03 \\
\midrule
\multirow{4}{*}{$\mathcal{M}_{\mathrm{RLP}}$}       & 0.99   & 45.20 & 36.31 & 38.82 \\
       & 0.995  & 45.18 & 37.36 & 39.21 \\
 & \textbf{0.999} & \textbf{45.98} & \textbf{37.38} & \textbf{39.54} \\
       & 0.9995 & 45.64 & 36.84 & 39.20 \\
\bottomrule
\end{tabular}
}%
\end{center}
\caption{\textbf{Effect of temperature $\tau$ on performance.} Best result highlighted.}
\label{tab:tau_ablation}
\end{table*}

Across this range, training remained stable and we did not observe divergent or unstable behavior in any of our runs. Concerns that the model could “game” the objective by degrading the baseline do not manifest because the baseline is updated only via the EMA of the student parameters: for the baseline to degrade, the student must degrade first, which is immediately penalized through the primary reward term $\log p_\theta$. Thus the EMA baseline provides a stable, meaningful measure of information gain.

\paragraph{Wall‑clock time of RLP training versus SFT.} We conduct a direct comparison using 32 H100 GPUs with a global batch size of 512 and a 32K context length. As shown in the table below, RLP incurs an expected overhead due to the generation phase ($G=16$ rollouts). While SFT, which has a similar computational profile to standard Continuous Pretraining (CPT), achieves a throughput of 92.34 samples/s (approx. 5.5s per step), RLP operates at 41.07 samples/s (approx. 12.5s per step). This results in a per-step slowdown factor of roughly $2.25\times$.

\begin{table}[h!]
\centering
\begin{tabular}{@{}lccccc@{}}
\toprule
\textbf{Method} & \textbf{Batch Size} & \textbf{Rollouts ($G$)} & \textbf{Time/Step (s)} & \textbf{Throughput (samples/s)} & \textbf{Relative Speed} \\
\midrule
SFT & 512 & N/A & 5.54 & 92.34 & $1.00\times$ \\
RLP & 512 & 16 & 12.47 & 41.07 & $0.44\times$ \\
\bottomrule
\end{tabular}
\caption{Comparison of SFT and RLP training efficiency.}
\label{tab:sft_rlp_speed}
\end{table}

However, this per-step cost must be viewed in the context of convergence efficiency and total compute. While RLP is $2.25\times$ slower per iteration than SFT/CPT, it is drastically more data-efficient. As detailed in \autoref{tab:diverse_data}, RLP achieves an overall average accuracy of 43.36\% on the Nemotron-Crossthink dataset using only 170M tokens. In contrast, the FLOP-matched CPT baseline required processing 6B tokens (roughly $35\times$ more data) to account for the compute difference, yet only reached an accuracy of 35.60\%. Thus, while RLP processes tokens slower due to rollouts, the dense reward signal extracts significantly more reasoning capability per FLOP, yielding a performance margin (+7.76\%) that standard training cannot replicate even with substantially higher data volume.

\paragraph{Final perplexity after post‑training.} We confirm that the model’s perplexity on ordinary tokens does not degrade; in fact, it significantly improves. Unlike standard RLHF, where optimizing for an external reward often causes the model distribution to drift away from natural language, our reward signal is the log-likelihood of the next token itself. Therefore, by definition, RLP is optimizing for prediction accuracy.

\begin{table}[h!]
\centering
\resizebox{1\textwidth}{!}{%
\begin{tabular}{@{}lcccc@{}}
\toprule
\textbf{Model} &
\textbf{Nemotron-CrossThink PPL $\downarrow$} &
\textbf{Nemotron-CrossThink NLL $\downarrow$} &
\textbf{Wikitext-103 PPL $\downarrow$} &
\textbf{Wikitext-103 NLL $\downarrow$} \\
\midrule
\textbf{$\mathcal{M}_{\mathrm{base}}$ (Qwen-1.7B)} 
& 2.91 & 1.06 & 5.83 & 1.77 \\
\textbf{$\mathcal{M}_{\mathrm{RLP}}$ (Ours)} 
& \textbf{2.36} & \textbf{0.86} & \textbf{4.48} & \textbf{1.50} \\
\bottomrule
\end{tabular}
} %
\caption{Perplexity and NLL comparison on Nemotron CrossThink and Wikitext-103.}
\label{tab:rlp_ppl_nll}
\end{table}

Mathematically, maximizing the RLP reward is equivalent to minimizing the cross-entropy of the reasoned predictor against the data distribution (Proposition 1). As shown in the table below, our empirical results confirm this theoretical guarantee: $\mathrm{M}_{\mathrm{RLP}}$ achieves consistently lower Perplexity (PPL) and Negative Log-Likelihood (NLL) compared to the base model. Crucially, this improvement holds for both the reasoning-intensive Nemotron CrossThink dataset and the general-domain Wikitext-103 benchmark, demonstrating that the ``thoughts" generated by the model successfully compress information to better predict ordinary text.

\paragraph{Computational Cost and FLOP Analysis of RLP.} A potential concern in comparing RLP against CPT is the perceived computational burden of autoregressively generating long reasoning traces. This would indeed be prohibitive if the rollout policy were applied at every token position in a sequence. In practice, however, RLP is applied to only \emph{one randomly sampled token per sequence}, which dramatically reduces the computational burden. Instead of scaling with $L_{doc} \times L_{CoT}$, the rollout cost scales with $1 \times L_{CoT}$ per sequence. This design choice makes RLP computationally feasible and allows us to interleave it with standard training efficiently. In addition, autoregressive generation involves a bottleneck compared to parallel processing. We agree that this affects wall-clock time due to memory bandwidth constraints, but it does not incorrectly skew the \textit{FLOP} calculation used for the baselines.

In Appendix \ref{app:add_abl}, we calculated the FLOP-equivalent budget by summing the tokens used for gradient updates and the tokens generated during rollouts. We compared RLP (170M input tokens) against a CPT baseline trained on 6B tokens. This 35$\times$ increase in data for the baseline is a rigorous upper bound for two reasons:

\begin{itemize}[leftmargin=*]
    \item \textbf{Operation Count:} The FLOPs of a forward pass for generating one token is approximately $2N$ (where $N$ is parameter count). The cost of training on one token (forward + backward) is approximately $6N$. By equating one generated token to one trained token in our FLOP calculation, we are effectively penalizing RLP (counting generation as 3x more expensive than it theoretically is in terms of FLOPs).
    \item \textbf{Total Compute:} Even with the overhead of 16 rollouts of length 2048 per document, the total floating-point operations performed by RLP on 170M documents are comparable to (or less than) performing standard forward/backward passes on the 6B tokens used in the $\mathrm{M}_{\mathrm{CPT}}$[6B] baseline.
\end{itemize}

While autoregressive generation is indeed slower in terms of wall-clock time, the purpose of the baseline is to compare compute efficiency. RLP applied to a single token per document is highly efficient, and our $\mathrm{M}_{\mathrm{CPT}}$[6B] baseline represents a compute-matched, which RLP still outperforms significantly (Overall Avg 42.13\% vs 38.04\%).

\paragraph{On Self-Referentiality and the Semantics of the RLP Reward.} A natural concern for any method that leverages model-internal signals is whether the learning dynamics risk becoming self-referential—rewarding increases in internal confidence rather than genuine improvements in correctness or reasoning ability. In RLP, however, the reward structure is explicitly grounded in the data rather than in unconstrained model self-agreement.

\begin{itemize}[leftmargin=*]
    \item \textbf{Reward is anchored to ground-truth tokens.} For each sampled position, the reward
\[
r(c_t) \;=\; \log p_\theta(x_t \mid x_{<t}, c_t)\;-\;\log \bar p_\phi(x_t \mid x_{<t})
\]

is defined with respect to the ground-truth next token $x_t$ from the corpus. Proposition~1 shows that, in expectation over $x_t \sim p^*(\,\cdot \mid x_{<t})$, this reward equals the reduction in cross-entropy achieved by conditioning on the thought $c_t$. A thought therefore receives positive reward only if it moves probability mass \emph{toward} the correct continuation in the true data distribution. Increased confidence on an incorrect continuation strictly decreases the reward. This prevents the model from benefiting by simply inflating logit magnitudes or reinforcing patterns unrelated to accuracy.
    \item \textbf{EMA baseline prevents degenerate self-consistency loops.} RLP compares each thought-conditioned prediction to an exponential moving average (EMA) baseline $\bar p_\phi$ evaluated on the same context and same ground-truth token. If the current parameters shift toward patterns that improve internal consistency but harm prediction of the observed token, the relative likelihood under $p_\theta$ falls and the corresponding thought receives a negative advantage. Group-relative normalization and advantage clipping further ensure that thoughts cannot win reward by global logit scaling alone; only thoughts that contribute meaningful information about the next token outperform the EMA teacher in expectation.
    \item \textbf{External evaluations validate correctness rather than internal consistency.} The most important empirical question is whether internal information gain translates into better reasoning on verifiable tasks. Across GSM8K, MATH500, MMLU-Pro, GPQA, and other benchmarks with objectively checkable answers, RLP-trained models consistently outperform both the base model and compute-matched continuous-pretraining baselines—even when the latter consume substantially more training tokens at equal FLOPs. Notably, these gains persist after a strong post-training pipeline involving SFT and RLVR with external verifiers. If RLP were primarily amplifying internal confidence without improving correctness, these advantages would be expected to collapse or become fragile under verifier supervision. Instead, RLP-initialized models remain ahead, particularly on reasoning-heavy domains, indicating that the learned thoughts encode genuinely useful information and not merely self-reinforcing patterns.
\end{itemize}

Overall, the formulation of the RLP reward ensures that the model is optimized for meaningful reductions in predictive error on the underlying data distribution, while empirical evidence confirms that these internal information gains translate into improved external reasoning performance.

\paragraph{Generalizability across Architectures and Data Distributions}

To rigorously assess the universality of our approach, we evaluated RLP on two models chosen specifically for their significant divergence in both architectural design and data provenance: Qwen3-1.7B-Base and Nemotron-Nano-12B-V2. These distinct settings demonstrate that RLP is not limited to a single model family or training recipe.

\begin{itemize}[leftmargin=*]
\item \textbf{Architectural Heterogeneity:} The models represent fundamentally different backbone architectures. Qwen3-1.7B-Base utilizes a standard, pure Transformer architecture. In contrast, Nemotron-Nano-12B-V2 is a \textit{Hybrid Mamba2-Transformer}, which integrates State Space Models (SSM) with attention layers and employs a distinct tokenizer and training recipe. The fact that RLP translates effectively to this hybrid architecture—driving an increase in overall average accuracy from 42.81\% to 61.32\%—provides strong evidence that the method is architecture-agnostic.

\item \textbf{Data Lineage Independence:} The models also rely on distinct pretraining distributions. Nemotron-Nano-12B-V2 was trained from scratch on a massive mixture of raw public data (including Web Crawl, arXiv, and OpenWebMath) and synthetic data generated by a diverse ensemble of teacher models. As detailed in recent technical reports, this ensemble extends well beyond the Qwen family to include DeepSeek-R1 \citep{deepseekai2025deepseekr1incentivizingreasoningcapability}, Mixtral-8x22B-v0.1 \citep{jiang2024mixtralexperts}, DeepSeek-V3 \citep{deepseekai2025deepseekv3technicalreport}, Phi-4 \citep{abdin2024phi4technicalreport}, and Nemotron-4-340B \citep{nvidia2024nemotron4340btechnicalreport}.
\end{itemize}

By validating RLP on a Hybrid Mamba model trained on a multi-source ensemble distribution, distinct from the pure Transformer Qwen baseline, we confirm the method's scalability and robustness across disparate experimental environments.

\paragraph{Analysis of thought traces.} We examine the model's internal thought traces to understand how RLP shapes reasoning. A manual analysis of 50 randomly sampled traces reveals consistently high-quality reasoning, characterized by grammatical correctness and strong contextual relevance, confirming that the information-gain objective effectively suppresses unproductive generation. We observe three qualitative properties: (1) \textit{focused, context-aware reasoning}, where thoughts orient the model to the specific logical step required by the context; (2) \textit{continuation-style} formatting, acting as internal scratch work that mirrors the prefix's tone and notation without meta-commentary; and (3) \textit{utility-driven anticipation}, where thoughts predict structural moves (e.g., upcoming rules or algebraic steps) to maximize the likelihood of the ground-truth continuation. An illustrative example follows.

\begin{takeaway}{teal}{Prefix}
``West Virginia students invited to enter Ornament Competition
CHARLESTON, W.Va. (AP) — West Virginia students in kindergarten through 12th 
grade may enter the First Lady Student Ornament Competition this fall.
First lady Cathy Justice is asking all students to participate in the 18th 
annual event. It is open to students in public and private schools as well as 
those who are home-schooled. Students are asked to create a “Nutcracker”-themed ornament for a tree to be displayed at the Culture Center in Charleston during the holidays, according 
to a news release from Gov. Jim Justice's office. There will be four divisions according to grade, and a winning class will be chosen from each division. The winning ornaments will be donated in January to the West Virginia State Museum. The four winning classes will"
\end{takeaway}

\begin{takeaway}{teal}{Thought Trace}
"The article is outlining contest logistics. The next sentence will likely add a
specific detail such as what the winning classes receive, how the ornaments will
be displayed, or other submission guidelines. Maintain the neutral news tone and 
extend the informational structure already established."
\end{takeaway}

\section{Data Blend Extended Results}




To further examine whether \ours learns transferable reasoning beyond narrowly curated datasets, we evaluate it across a broad spectrum of corpora spanning both structured reasoning data and open-ended pretraining distributions. All experiments start from \qwen and apply \ours for 200 steps, consuming 170M input tokens, while keeping all other training settings fixed.

We consider two primary corpus families. The first consists of SFT-style reasoning datasets, including OmniMath \citep{gao2024omnimathuniversalolympiadlevel}, OpenThoughts \citep{guha2025openthoughtsdatarecipesreasoning}, and Nemotron-CrossThink \citep{akter2025nemotroncrossthinkscalingselflearningmath}. These datasets contain structured question–answer pairs with explicit reasoning content and represent the typical setting where reinforcement-based methods are expected to perform well. The second family consists of general-purpose pretraining corpora, including academic papers (ACAD), math textbooks (Math-Text), and open-ended web crawl data. These datasets are not curated specifically for reasoning and more closely resemble large-scale pretraining mixtures.

\begin{table}[t]
\centering
\setlength{\tabcolsep}{4pt}
\renewcommand{\arraystretch}{1.1}
\footnotesize

\begin{adjustbox}{max width=\linewidth}
\begin{tabularx}{\linewidth}{
  L
  S[table-format=2.2]
  S[table-format=2.2]
  S[table-format=2.2]
  >{\columncolor{gray!8}}S[table-format=2.2]
  >{\columncolor{gray!8}}S[table-format=2.2]
  >{\columncolor{gray!8}}S[table-format=2.2]
}
\toprule
\textbf{Benchmark} &
\multicolumn{1}{c}{\textbf{\om}} &
\multicolumn{1}{c}{\textbf{\ot}} &
\multicolumn{1}{c}{\textbf{\nc}} &
\multicolumn{1}{>{\columncolor{gray!15}}c}{\textbf{\edu}} &
\multicolumn{1}{>{\columncolor{gray!15}}c}{\textbf{\ccmath}} &
\multicolumn{1}{>{\columncolor{gray!15}}c}{\textbf{\dqa}} \\
\midrule
MATH500             & 57.95 & 59.55 & 62.65 & 59.75 & 60.03 & 61.58 \\
GSM8K               & 74.82	& 74.80	& 79.97	& 76.10	& 75.95	& 76.48 \\
AMC23               & 32.50	& 32.81	& 30.94	& 31.88	& 34.38	& 35.00 \\
Minerva             & 20.63	& 20.96	& 25.46	& 22.98	& 21.92	& 22.43 \\
MMLU                & 55.72 & 55.84 & 56.72 &	56.35 &	56.02 &	55.93 \\
MMLU@1[4]           & 50.85 & 50.84 & 52.11 &	51.51 &	50.72 &	50.11 \\
MMLU-Pro            & 35.81 & 34.55 & 38.58 &	35.11 &	36.06 &	37.02 \\
MMLU-Pro@1[4]       & 31.47 & 30.57 & 35.10 &	31.06 &	31.86 &	32.43 \\
GPQA                & 29.29 & 25.76 & 32.32 &	30.30 &	29.29 &	29.29 \\
GPQA@1[4]           & 30.30 & 27.27 & 26.14 &	28.03 &	26.39 &	27.78 \\
\midrule
Math Avg            & 46.48 & 47.03 & 49.76 &	47.68 &	48.07 &	48.87 \\
Science Avg         & 40.27 & 38.72 & 42.54 &	40.59 &	40.46 &	40.75 \\
Science Avg@1[4]    & 37.54 & 36.23 & 37.78 &	36.87 &	36.32 &	36.77 \\
\midrule
\textbf{Overall}    & 41.43 & 40.66 & 43.36 &	41.71 &	41.62 &	42.13 \\
\bottomrule
\end{tabularx}
\end{adjustbox}

\caption{Quantitative benchmarks for \qwen, showing the impact of RLP on different data blends.
Shaded columns indicate general pretraining corpus.}
\label{tab:nemo-main-res-updated-data-blend}
\vspace{-2mm}
\end{table}

Table \ref{tab:diverse_data} shows that \ours consistently improves over the base model across all corpus types. Importantly, the gains are not confined to math-centric SFT data. While the strongest improvements occur on Nemotron-CrossThink within the SFT family, substantial gains are also observed when training on purely general corpora such as academic papers and web crawl data. This demonstrates that \ours does not depend on carefully constructed reasoning traces. Instead, it extracts a transferable reasoning signal even from heterogeneous, weakly structured text.

A notable contrast emerges when compared with prior RL-based approaches that report improvements concentrated in high-quality math data and limited transfer to broader domains. In our experiments, models trained with mixed-domain or open-ended corpora simultaneously improve math, science, and professional benchmarks. There is no evidence of domain-specific overfitting or degradation on math when incorporating diverse data. Rather, diversity appears to strengthen general reasoning performance.

Table \ref{tab:nemo-main-res-updated-data-blend} provides a task-level breakdown. Across MATH500, GSM8K, MMLU, MMLU-Pro, GPQA, and related metrics, improvements are observed regardless of whether the underlying training corpus is structured SFT data or general pretraining text. Even web-scale crawl data yields competitive math and science averages, suggesting that \ours leverages latent reasoning patterns embedded in broad distributions.

Overall, these results support three conclusions. First, \ours scales across corpus families without requiring specialized reasoning datasets. Second, it exhibits genuine cross-domain transfer rather than narrow task adaptation. Third, data diversity amplifies the learned reasoning signal instead of diluting it. Together, this positions \ours as a domain-agnostic pretraining augmentation that enhances both reasoning robustness and general benchmark accuracy.

\end{document}